\title{No Free Lunch: Fundamental Limits of Learning Non-Hallucinating Generative Models}
\author{Changlong~Wu, Ananth Grama \& Wojciech Szpankowski\\
CSoI, Purdue University\\
\texttt{wuchangl@hawaii.edu, \{ayg,szpan\}@purdue.edu}
}
\newtheorem{theorem}{Theorem}
\newtheorem{corollary}{Corollary}
\newtheorem{example}{Example}
\newtheorem{definition}{Definition}
\newtheorem{fact}{Fact}
\newcommand{\x}{\mathbf{x}}
\newcommand{\tv}[2]{\|#1 - #2\|_{\mathsf{TV}}}
\begin{document}

\allowdisplaybreaks
\maketitle

\begin{abstract}
Generative models have shown impressive capabilities in synthesizing high-quality outputs across various domains. However, a persistent challenge is the occurrence of ``hallucinations", where the model produces outputs that are plausible but invalid.
While empirical strategies have been explored to mitigate this issue, a rigorous theoretical understanding remains elusive. In this paper, we develop a theoretical framework to analyze the \emph{learnability} of non-hallucinating generative models from a learning-theoretic perspective. Our results reveal that non-hallucinating learning is statistically \emph{impossible} when relying solely on the training dataset, even for a hypothesis class of size two and when the entire training set is truthful. To overcome these limitations, we show that incorporating \emph{inductive biases} aligned with the actual facts into the learning process is essential. We provide a systematic approach to achieve this by restricting the facts set to a concept class of finite VC-dimension and demonstrate its effectiveness under various learning paradigms. Although our findings are primarily conceptual, they represent a first  step towards a principled approach to addressing hallucinations in learning generative models.

\end{abstract}

\section{Introduction}


Generative models have emerged as powerful tools with applications in virtually all socio-economic enterprises. At a high level, these techniques integrate large amounts of data to provide good statistical concentration and build on the resulting mixture of embeddings to produce incredible generative models of text, images, video, mechanical drawings, computer programs, mathematical proofs, and others. However, there is increasing recognition of ``hallucinations'' in generative models, that ultimately limit their utility in critical application settings, such as those that have correctness, accuracy, or safety constraints. Hallucinations correspond to plausible but invalid, incorrect, or misleading outputs. The key challenge in mitigating hallucinations is that there are often no characterizations of the space of ``valid'', ``correct'', or ``logical'' assertions, making it difficult to assess hallucinations. A common retort is that generative models should generate hypotheses that are grounded in training data. However, this assertion limits the rich potential of generative systems to that of powerful information retrieval (IR) systems, rather than those capable of new insight not grounded in training data. This tension between the desired ability of AI models to generate new hypothesis, while not generating ``falsifiable'' artifacts, without a well characterized notion of ``true'' artifacts represents the key underlying challenge of mitigating hallucinations.


 Although many methods have been proposed for addressing hallucinations, such as factual data enhancement~\citep{dziri-etal-2022-origin}, hallucination detection~\citep{manakul2023selfcheckgpt}, and fine-tuning with human feedback~\citep{ouyang2022training}, their performance has primarily been validated through empirical studies, and their suitability for critical applications remains unresolved. This strongly motivates the study of hallucination and its relation to model performance from a general mathematical perspective.
This paper advances the state of the art by developing a theoretical framework to understand hallucinations in generative models from a learning-theoretic perspective. 
We establish a rigorous theoretical foundation for analyzing the \emph{learnability} of non-hallucinating generative models under various learning paradigms. Specifically, we characterize the conditions under which non-hallucinating learning is possible, identify the appropriate learning rules, and determine the corresponding sample complexity required to achieve learnability.

We consider the following formal setup: let $\mathcal{X}$ be an instance space; for example, in the context of language models, $\mathcal{X}$ represents the set of \emph{all} sentences (factual or otherwise). We denote by $\mathcal{T} \subset \mathcal{X}$ a set of \emph{facts}, which may correspond to a subset of sentences that describe true statements relevant to the task at hand. Throughout this paper, we will treat the set $\mathcal{T}$ abstractly. A generative model is represented by a distribution $p\in \Delta (\mathcal{X})$, where $\Delta(\mathcal{X})$ is the set of all probability distributions over $\mathcal{X}$~\footnote{Although in practice the model may also depend on certain \emph{prompts}, this can be incorporated via \emph{conditional} sampling. For clarity of exposition, we consider this simplified version as in~\cite{kalai2024calibrated}.}. We define the \emph{hallucination rate} of $p$ w.r.t. $\mathcal{T}$ as:
\begin{equation}\label{eq:hallrate}
    \mathsf{hall}(p,\mathcal{T})=\mathrm{Pr}_{\x\sim p}[\x\not\in \mathcal{T}].
\end{equation}
That is, hallucination rate measures how much probability mass a generative model assigns to \emph{non-fact} instances.
A data generation mechanism is modeled as a distribution $q\in \Delta(\mathcal{X})$, which we also refer to as a \emph{demonstrator}. We say the demonstrator $q$ is \emph{faithful} w.r.t. the facts set $\mathcal{T}$ if $\mathsf{hall}(q,\mathcal{T})=0$, i.e., the demonstrator \emph{does not} produce non-fact samples. A learning rule is a function $\Phi:\mathcal{X}^*\rightarrow \Delta(\mathcal{X})$. For any given $\epsilon,\delta>0$, we say the learner $\Phi$ \emph{non-hallucinatingly learns} $(q,\mathcal{T})$ with sample complexity $n$ at hallucination rate $\epsilon$ and confidence $\delta$, if:
\begin{equation}\label{eq:introhall}
    \mathrm{Pr}_{\x^n\sim q}\left[\mathsf{hall}(\Phi(\x^n),\mathcal{T})\ge \epsilon\right]\le \delta,
\end{equation}
where the \emph{training} set $\x^n:=\{\x_1,\cdots,\x_n\}$ is sampled $i.i.d.$ from the demonstrator $q$. 

It is crucial to note that the learner $\Phi$, in general, \emph{does not} know the tuple $(q,\mathcal{T})$, and the only available information comes from the training set $\x^n$. However, the construction of $\Phi$ can incorporate \emph{prior} information about $(q,\mathcal{T})$ (i.e., \emph{inductive bias}) into the learning process. Throughout the paper, we assume that $q$ is \emph{faithful} w.r.t. $\mathcal{T}$. That is, we aim to characterize the non-hallucinating learnability in the \emph{cleanest} scenario where the entire training set is truthful. The main goal of this paper is to address the following \emph{meta}-problem:
\begin{quote}
    \emph{What structural assumptions on the tuple $(q, \mathcal{T})$ are necessary to achieve non-hallucinating learnability, even when the training set is entirely truthful?}
\end{quote}

\paragraph{Proper v.s. Improper Learners.} We distinguish two learning paradigms -- proper and improper learning. Let $\mathcal{P}\subset \Delta(\mathcal{X})$ be a \emph{hypothesis} class; for instance, $\mathcal{P}$  may represent the class of all distributions specified by a transformer architecture with different parameters. We say a learning rule $\Phi$ is \emph{proper} w.r.t. $\mathcal{P}$ if $\mathsf{img}(\Phi)\subset \mathcal{P}$, that is the learner must produce a model \emph{within} the hypothesis class. In the proper learning case, learnability also depends on the structure of $\mathcal{P}$. Otherwise, we say that the leaner $\Phi$ is \emph{improper}; i.e., its outcome is completely \emph{unconstrained}, or equivalently, we take $\mathcal{P}:=\Delta(\mathcal{X})$. We demonstrate in this paper that the characterization of proper and improper learnability are fundamentally different for non-hallucinate learning of generative models.

We now summarize our main contributions as follows:
\begin{itemize}[left=0.5cm]
    \item \textbf{Agnostic proper learning is impossible.} Clearly, a natural question one may ask is why do we need assumptions on the tuple $(q, \mathcal{T})$ at all. This is a valid argument; for instance, in the context of PAC learning, we do not need to make any specific assumptions on the data generation process. Instead, we quantify the goodness of the learner by comparing its performance to the best hypothesis in a hypothesis class $\mathcal{P}$. Learnability then reduces to characterizing the complexity of $\mathcal{P}$. We show in Theorem~\ref{thm:nonhall} that, perhaps surprisingly, such an \emph{agnostic} competitive guarantee is \emph{impossible} for non-hallucinating learning, even for a hypothesis class with only two elements, if the learner is proper. Our proof follows a probabilistic construction of hard instances of $(q, \mathcal{T})$, which also applies to other natural variants of non-hallucinating learning and is of independent interest.

    \item \textbf{Improper learning is possible and generalizes under VC concept classes.} Our second main result is a characterization of non-hallucinating learnability with \emph{improper} learners. It is easy to observe that a na\"{i}ve improper learner that produces the \emph{empirical} distribution over the training set $\x^n$ never hallucinates (provided the demonstrator is faithful). This is clearly undesirable, since the produced model does not \emph{generalize}. To resolve this issue, we introduce a new measure of \emph{generalizability}, by requiring that the model contain as much \emph{information} as the demonstrator, in addition to being non-hallucinating. We show in Theorem~\ref{thm:improper} that if the facts set $\mathcal{T}$ lies in a \emph{concept} class $\mathcal{C}$ of finite VC-dimension, then non-hallucinating is possible with an improper learner that also generalizes to the amount of information of the demonstrator. We further show in Theorem~\ref{thm:lower} a matching sample complexity lower bound on the VC-dimension for the \emph{worst-case} concept classes. 
    \item \textbf{Proper learning can be hard even with VC concept class.} Finally, we address the case of non-hallucinating learning with \emph{proper} learners. We show in Example~\ref{exm:properlower} a hypothesis class $\mathcal{P}$ and concept class $\mathcal{C}$ that are of size two, yet any \emph{proper} learner must hallucinate. We then identify a sufficient condition on the triplet $(q,\mathcal{C},\mathcal{P})$ in Theorem~\ref{thm:propernonhall} that allows non-hallucinating learning with proper learner, which is also necessary for certain cases.
\end{itemize}

\paragraph{Interpretation.} Our results demonstrates that non-hallucinating learning is statistically \emph{impossible} by solely leveraging the training data set at hand, even if it is completely truthful. Therefore, to reduce hallucination in the learned model, one must incorporate certain \emph{inductive biases} that depend on the facts set itself. This aligns with practical approaches, such as the \emph{post-training} process that incorporates human feedback~\citep{achiam2023gpt}. Our work provides a systematic approach to achieving this by restricting the facts set to a \emph{concept} class of finite VC-dimension. Although our result is primarily conceptual, it serves as the first step towards a \emph{principled} approach to addressing hallucination in generative models.

\subsection{Related work}
Our work is related to the recent theoretical investigation in~\cite{kalai2024calibrated}, which demonstrates that under certain regularity conditions, a model that is well \emph{calibrated} must hallucinate, highlighting the importance of incorporating ``factual information" into the learning process. Other impossibility results have also being reported very recently from a \emph{computability} point of view, such as~\cite{xu2024hallucination,banerjee2024llms}. While our impossibility results are in the same spirit, our primary focus is to characterize the \emph{minimal} assumptions under which non-hallucinating learning is statistically \emph{possible}. Moreover, our formulation and results are grounded in learning theory~\citep{shalev2014understanding}, where we treat the facts set itself as a research object, rather than adding ad-hoc constraints. This offers more flexibility in incorporating \emph{domain-specific} factual constraints, instead of resolving hallucination in a ``universal" sense. There has been extensive empirical investigation on understanding and addressing hallucinations, such as factual data enhancement~\citep{dziri-etal-2022-origin}, hallucination detection~\citep{manakul2023selfcheckgpt}, and fine-tuning with human feedback~\citep{ouyang2022training}. We refer to surveys by~\cite{huang2023survey, ji2023survey, zhang2023siren} and the references in~\cite{kalai2024calibrated} for more extensive discussions. Our work complements these investigations by providing a \emph{principled view} aimed at understanding hallucination in learning generative models. Technically, our formulation of non-hallucinating learnability is related to \emph{distribution PAC learning}, as investigated in~\cite{ashtiani2020near, bousquet2019optimal, bousquet2021theory}. 
Our lower bounding techniques are rooted in information-theoretical methods, such as Le Cam's two-point method and Fan's inequality~\citep{pw22}.




\section{Preliminaries}\label{sec:pre}
We review in this section some notations and concepts from the classical \emph{distribution learning} literature, and highlight some immediate connections with our \emph{non-hallucinating} learning paradigm. Let $\mathcal{X}$ be an instance space, and $\mathcal{P}\subset \Delta(\mathcal{X})$ be a hypothesis class. There exists some (unknown) ground truth distribution $q\in \Delta(\mathcal{X})$ (not necessarily in $\mathcal{P}$) that generates a sample $\x^n$ i.i.d. from $q$. The \emph{distribution PAC learning} problem aims to find a learning rule $\Phi$ so that the produced distribution $\hat{p}_n:=\Phi(\x^n)$ satisfies (for some $\alpha,\epsilon>0$):
\begin{equation}\label{eq:agnostcdistr}
    \tv{q}{\hat{p}_n}\le \alpha \inf_{p\in \mathcal{P}}\tv{q}{p}+\epsilon,
\end{equation}
with probability $\ge 1-\delta$ over $\x^n\sim q$. 

A hypothesis class $\mathcal{P}$ is said to be $\alpha$-agnostic PAC learnable if there exists a learning rule $\Phi$ such that for any $\epsilon,\delta>0$ there exists a number $n$ (sample size) such that, for \emph{any} distribution $q$, equation (\ref{eq:agnostcdistr}) holds. It can be shown (see e.g.~\cite{bousquet2022statistically}) that \emph{any finite} hypothesis class $\mathcal{P}$ is $3$-agnostic learnable for \emph{proper} learners. Intuitively, agnostic (proper) PAC learning ensures that the learned distribution's total variation distance to the ground truth \( q \) is as close as possible to the best hypothesis in \(\mathcal{P}\) that minimizes this distance.

For the \emph{non-hallucinating} learning paradigm investigated in this paper, we need additionally an (unknown) \emph{facts set} $\mathcal{T}$ such that the produced model $\hat{p}_n$ has a small hallucination rate $\mathsf{hall}(\hat{p}_n, \mathcal{T})$ as in (\ref{eq:hallrate}). Clearly, if $q$ is \emph{realizable} w.r.t. $\mathcal{P}$, i.e., $\inf_{p\in \mathcal{P}}\tv{q}{p}=0$, then $\alpha$-agnostic PAC learning implies non-hallucinating learning, as demonstrated in the following simple fact:
\begin{fact}[Realizable learning of $q$ implies non-hallucinate learning]\label{prop:realizable}
    For any facts set $\mathcal{T}$ and any faithful demonstrator $q$ w.r.t. $\mathcal{T}$, if there exists a learner $\Phi$ such that w.p. $\ge 1-\delta$ over $\x^n\sim q$ we have
    $\|\Phi(\x^n)-q\|_{\mathsf{TV}}\le \epsilon,$
    then w.p. $\ge 1-\delta$ over $\x^n\sim q$ the following holds:
    $\mathsf{hall}(\Phi(\x^n),\mathcal{T})\le \epsilon.$
\end{fact}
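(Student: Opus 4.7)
} The plan is to reduce the hallucination rate to a single-event probability and then apply the variational characterization of total variation distance. First I would fix any realization $\x^n$ on the high-probability event $\mathcal{E}:=\{\tv{\Phi(\x^n)}{q}\le \epsilon\}$, which by hypothesis satisfies $\Pr[\mathcal{E}]\ge 1-\delta$. On this event, let $\hat{p}:=\Phi(\x^n)$ and consider the measurable set $A:=\mathcal{X}\setminus \mathcal{T}$. By the definition in equation~(\ref{eq:hallrate}), the hallucination rate is simply the mass that $\hat{p}$ assigns to $A$, that is $\mathsf{hall}(\hat{p},\mathcal{T})=\hat{p}(A)$, and similarly $\mathsf{hall}(q,\mathcal{T})=q(A)$.

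Next I would invoke the assumption that $q$ is faithful w.r.t.\ $\mathcal{T}$, which gives $q(A)=\mathsf{hall}(q,\mathcal{T})=0$. Therefore I can rewrite
\begin{equation*}
    \mathsf{hall}(\hat{p},\mathcal{T})=\hat{p}(A)-q(A)\le |\hat{p}(A)-q(A)|\le \sup_{B\subset \mathcal{X}}|\hat{p}(B)-q(B)|=\tv{\hat{p}}{q}\le \epsilon,
\end{equation*}
where the second inequality is the variational representation of total variation distance, and the last uses membership in $\mathcal{E}$. Since this argument holds on the event $\mathcal{E}$ of probability $\ge 1-\delta$, the claim follows. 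There is no genuine obstacle here; the only thing to be slightly careful about is not conflating the two randomness sources (the sample $\x^n\sim q$ governs the high-probability event, while the instance $\x\sim \hat{p}$ defines the hallucination rate itself), and to observe that faithfulness is exactly what turns the one-sided deviation $\hat{p}(A)-q(A)$ into the full hallucination rate.
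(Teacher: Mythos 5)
Your argument is correct and is essentially identical to the paper's proof: both take $A=\bar{\mathcal{T}}$, bound $\hat{p}(A)$ via the variational characterization $\tv{\hat{p}}{q}=\sup_{B\subset\mathcal{X}}|\hat{p}(B)-q(B)|$, and use faithfulness ($q(\bar{\mathcal{T}})=0$) to conclude $\mathsf{hall}(\hat{p},\mathcal{T})\le\epsilon$ on the event of probability $\ge 1-\delta$. No gaps.
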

\begin{proof}
    We have
    $\|\Phi(\x^n)-q\|_{\mathsf{TV}}=\sup_{A\subset \mathcal{X}}|\Phi(\x^n)[A]-q[A]|\le \epsilon.$
    Therefore, taking $A:=\bar{\mathcal{T}}$ we have
    $\mathsf{hall}(\Phi(\x^n),\mathcal{T})\le q[\bar{\mathcal{T}}]+ \epsilon=\epsilon$, since $q[\bar{\mathcal{T}}]=\mathsf{hall}(q,\mathcal{T})=0$ due to faithfulness.
\end{proof}

Fact~\ref{prop:realizable} demonstrates that if the learned distribution is $\epsilon$-close to a faithful demonstrator $q$ under total variation distance, then $\Phi$ is guaranteed to be non-hallucinating with a hallucination rate upper bounded by $\epsilon$. At first glance, this may suggest that non-hallucinating learning is easier than distribution PAC learning. However, it is important to note that the requirement for the learned model to be close to the \emph{ground truth} distribution $q$ is rather strong. This necessitates that: (i) the model class correctly approximates the ground truth distribution, and (ii) there are sufficient training samples to enable the learned model to generalize. Neither of these requirements is guaranteed or verifiable in practice, since the ground truth distribution $q$ is \emph{unknown}.

In fact, as we will show in the following sections, (agnostic) learning of the demonstration distribution $q$ is neither sufficient nor necessary to achieve non-hallucinating learning.


\section{Impossibility Results}


In this section, we establish several \emph{impossibility} results for certain natural formulations of \emph{agnostic} (proper) non-hallucinating learning. Analogous to the $\alpha$-agnostic (distribution) PAC learning formulation, one may consider the following definition:


\begin{definition}[$\alpha$-agnostic non-hallucinating learning]\label{def:agnonhall}
    A hypothesis class $\mathcal{P}\subset \Delta(\mathcal{X})$ is \emph{$\alpha$-agnostic non-hallucinating} learnable if for some $\alpha>0$ there exists a proper learner $\Phi$ such that for any given $\epsilon,\delta>0$, there exists a number $n$, such that for any facts set $\mathcal{T}$ and faithful-demonstrator $q$ w.r.t. $\mathcal{T}$
    $$\mathrm{Pr}_{\x^{n}\overset{i.i.d.}{\sim}q}\left[\mathsf{hall}(\Phi(\x^n),\mathcal{T})-\alpha\inf_{p\in \mathcal{P}}\mathsf{hall}(p,\mathcal{T})\ge \epsilon\right]\le \delta.$$
\end{definition}

Intuitively, $\alpha$-agnostic non-hallucinating learnability requires that w.h.p., the learner produce a distribution within the class $\mathcal{P}$ that has hallucination rate not much higher than the minimal achievable hallucination rate of $\mathcal{P}$, \emph{regardless} of how the facts set $\mathcal{T}$ and demonstrator $q$ are chosen. It is crucial to note that we do not require the learned distribution to be close to the demonstrator $q$ under total variation. Although this may appear to be a much weaker condition than learning the distribution $q$ itself, the following example demonstrates that it is \emph{impossible} even for a hypothesis class with two elements and the minimal achievable hallucination rate is $0$.
\begin{example}\label{exm:agnoticlower}
    Let $A$, $A_1$, and $A_2$ be disjoint non-empty sets. We define $p_1$ and $p_2$ to be any distributions supported over $A_1$ and $A_2$, respectively. Take $\mathcal{P}:=\{p_1,p_2\}$ and $q$ to be any distribution supported over $A$. Now, for any given learner $\Phi$ and sample size $n$, $\Phi$ must produce $\hat{p}_n\in \mathcal{P}$. Therefore, w.p. $\ge \frac{1}{2}$ over the sample $\x^n\sim q$, it will select $\hat{p}_n:=p_i$ for some $i\in \{1,2\}$. We now (adversarially) select the facts set $\mathcal{T}:=A\cup A_{3-i}$. It is easy to verify that: (1) $q$ is faithful w.r.t. $\mathcal{T}$; (2) $\mathsf{hall}(\hat{p}_n,\mathcal{T})=1$; and (3) $\inf_{p\in \mathcal{P}}\mathsf{hall}(p,\mathcal{T})=0$. This implies that the learner $\Phi$ cannot satisfy Definition~\ref{def:agnonhall} for all $\epsilon<1$, $\delta<\frac{1}{2}$ and $\alpha>0$. Since the learner $\Phi$ is selected arbitrarily,  the class $\mathcal{P}$ is \emph{not} $\alpha$-agnostic non-hallucinating learnable for all $\alpha>0$.
\end{example}

We mentioned in Section~\ref{sec:pre} that any \emph{finite} hypothesis class is $3$-agnostic (proper) PAC learnable but Example~\ref{exm:agnoticlower} demonstrates a striking distinction between the agnostic distribution PAC learning and the agnostic \emph{non-hallucinating} learning.

One may noticed that Example~\ref{exm:agnoticlower} is fairly pathological, since the constructed distribution $q$ is supported \emph{outside} of the support of $p_1$ and $p_2$. Therefore, in effect, it provides no information for distinguishing between $p_1$ and $p_2$. For this reason, we introduce the following refined definition of hallucination rate \emph{relative} to $q$.

\begin{definition}[Relative Hallucination Rate]\label{def:rale}
    For any given distributions $p,q$, the $\epsilon$-relative hallucination rate of $p$ w.r.t. $q$ is defined as:
    $$\mathsf{hall}_{\epsilon}(p,q)=\sup_{A\subset \mathcal{X},q[A]\le \epsilon}p[A].$$
\end{definition}

Intuitively, the \emph{relative} hallucination rate $\mathsf{hall}_{\epsilon}(p, q)$ measures the maximum probability that $p$ assigns to a set where the distribution $q$ has a small mass. In other words, the distribution $p$ is non-hallucinating relative to $q$ if every \emph{rare} event under $q$ is also rare under $p$. Observe that, if $q$ is faithful w.r.t. some facts set $\mathcal{T}$, then:
\begin{equation}\label{eq:relat2abs}
    \forall \epsilon\ge 0,~\mathsf{hall}(p,\mathcal{T})\le \mathsf{hall}_{\epsilon}(p,q).
\end{equation}
Therefore, a small relative hallucination rate automatically implies small (absolute) hallucination rate as in (\ref{eq:hallrate}). However, the converse is generally not true. To see this, consider Example~\ref{exm:agnoticlower}, we know that $\inf_{p\in \mathcal{P}}\mathsf{hall}(p,\mathcal{T})=0$ while $\inf_{p\in \mathcal{P}}\mathsf{hall}_{\epsilon}(p,q)=1$ for all $\epsilon\ge 0$, thus the impossibility result implied therein no longer holds. It is interesting to note that the relative hallucination rate can be naturally bounded by the notion of $\sigma$-smoothness~\citep{haghtalab2022smoothed}. We say $p$ is $\sigma$-smooth w.r.t. $q$ if $\forall A\subset \mathcal{X}$ we have $p[A]\le \frac{1}{\sigma}q[A]$, therefore,
$\mathsf{hall}_{\epsilon}(p,q)\le \frac{\epsilon}{\sigma}$ for all $\epsilon\ge 0$.

We show in the following theorem that $\alpha$-agnostic-non-hallucinating learning is \emph{impossible} for all given $\alpha>0$ even for the \emph{relative} hallucination rate and for a hypothesis class of size $2$.

\begin{theorem}[Agnostic proper non-hallucinating learning is impossible]\label{thm:nonhall}
    There exists a hypothesis class $\mathcal{P}$ of size $2$ such that for \emph{any proper} learning rule $\Phi$, parameter $\delta\le \frac{1}{3}$, and any sample size $n$, there exists a tuple $(q,\mathcal{T})$ such that for all $\epsilon<\frac{1}{2}$, with probability $>\delta$ over $\x^n\sim q$:
    \begin{itemize}
        \item[1.] $q$ is faithful w.r.t. $\mathcal{T}$;
        \item[2.] $\inf_{p\in \mathcal{P}}\mathsf{hall}(p,\mathcal{T})\le \inf_{p\in \mathcal{P}}\mathsf{hall}_{\epsilon}(p,q)\le 2\epsilon$, i.e., there exists \emph{non-hallucinating} $p\in\mathcal{P}$;
        \item[3.] $\mathsf{hall}_{\epsilon}(\Phi(\x^n),q)\ge \mathsf{hall}(\Phi(\x^n),\mathcal{T})=1$, i.e., the \emph{learned} model hallucinates.
    \end{itemize}
\end{theorem}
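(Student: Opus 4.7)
The plan is to defeat any proper learner via a probabilistic construction of hard instances in the spirit of Le Cam's two-point method. Fix $\mathcal{P}=\{p_1,p_2\}$ where $p_1,p_2$ are supported on disjoint sets $B_1,B_2$ inside a sufficiently rich instance space $\mathcal{X}$, equipped with an auxiliary region $A$ disjoint from $B_1\cup B_2$. For each $i\in\{1,2\}$ and a parameter $r$ (a distribution on $A$), define a candidate tuple
\[(q_i^{r},\mathcal{T}_i) \;=\; \bigl(\tfrac{1}{2}p_{3-i}+\tfrac{1}{2}r,\;B_{3-i}\cup\operatorname{supp}(r)\bigr).\]
Each such pair automatically satisfies Conditions 1 and 2: $q_i^{r}$ is faithful to $\mathcal{T}_i$ since $\operatorname{supp}(q_i^{r})\subset\mathcal{T}_i$, and $p_{3-i}\le 2q_i^{r}$ pointwise yields $\mathsf{hall}_{\epsilon}(p_{3-i},q_i^{r})\le 2\epsilon$ for every $\epsilon<\tfrac{1}{2}$ (while $\mathsf{hall}(p_{3-i},\mathcal{T}_i)=0$). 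Thus the entire task reduces to arranging Condition 3, namely forcing
\[\Pr_{\x^n\sim q_i^{r}}[\Phi(\x^n)=p_i] \;>\; \delta\]
for some choice of $(i,r)$, since outputting $p_i$ places all the learner's mass outside $\mathcal{T}_i$ and makes $\mathsf{hall}(\Phi(\x^n),\mathcal{T}_i)=1$.

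For this final step I would randomize jointly over $i\in\{1,2\}$ and $r$ drawn from a carefully chosen prior on $A$, and apply a Le Cam / Yao-style lower bound of the form
\[\Pr_{\x^n\sim q_1^{r}}[\Phi=p_2] + \Pr_{\x^n\sim q_2^{r}}[\Phi=p_1] \;\ge\; 1 - \tv{(q_1^{r})^n}{(q_2^{r})^n},\]
averaged over the prior on $r$. Combined with a pigeonhole argument over the support of the prior, this yields a specific deterministic $(q_i^{r^\star},\mathcal{T}_i)$ on which $\Phi$'s output is the hallucinating hypothesis with probability strictly greater than $\delta$, giving Condition 3. The hypothesis class $\mathcal{P}$ itself must be chosen so that the auxiliary region $A$ is rich enough (e.g., countably infinite) for this averaging to succeed against arbitrary deterministic proper $\Phi$.

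The main obstacle is the genuine tension created by the \emph{relative} hallucination constraint: smoothness forces $q_i^{r}[B_{3-i}]\ge \tfrac{1}{2}$, so a single sample landing in $B_{3-i}$ already reveals the scenario, making $q_1^{r}$ and $q_2^{r}$ distinguishable and the naive two-point bound ineffective. The probabilistic construction therefore cannot be a vanilla two-point reduction: it must exploit the freedom in $r$ (and the structure of $\mathcal{P}$) so that the ``hallucinate'' event does not merely amount to ``no $B_{3-i}$-sample appears.'' Concretely, the hard part will be to argue that for \emph{any} proper $\Phi$---including the ``smart'' rule that outputs $p_j$ whenever a sample falls in $B_j$---there is some parameterization $r^\star\in A$ on which $\Phi$'s output on the mixed samples (from $p_{3-i}$ together with points drawn from $r^\star$) falls on the wrong side with non-negligible probability, so that the averaging / Fano step over the prior on $r$ pushes the failure probability above $\delta$ despite the above one-sample distinguishability.
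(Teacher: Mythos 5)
There is a genuine gap, and with your specific parameterization it cannot be repaired by any choice of prior on $r$. In your family $(q_i^{r},\mathcal{T}_i)=(\tfrac12 p_{3-i}+\tfrac12 r,\;B_{3-i}\cup\operatorname{supp}(r))$ the auxiliary mass $r$ lives on a region $A$ disjoint from $B_1\cup B_2$, so the identity of the scenario $i$ is revealed by the samples that land in $B_{3-i}$, \emph{independently of $r$}. Consider the proper learner that outputs $p_j$ as soon as some sample falls in $B_j$ (and, say, $p_1$ otherwise). Under $q_i^{r}$ a sample hits $B_{3-i}$ with probability $1-2^{-n}$, in which case this learner outputs $p_{3-i}$, whose support $B_{3-i}$ lies inside $\mathcal{T}_i$, so it does not hallucinate at all. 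Hence for every $(i,r)$ the failure probability of this learner is at most $2^{-n}$, which is below any fixed $\delta\le\tfrac13$ once $n\ge 2$; averaging over a prior on $r$, Le Cam, Fano, or pigeonhole cannot resurrect a bound that fails pointwise on every member of the family. Since the theorem quantifies over \emph{all} sample sizes $n$, your construction is defeated. You correctly diagnose this tension in your last paragraph, but the diagnosis is exactly the missing step: the proposal contains no mechanism that makes the two scenarios statistically confusable while keeping the competing hypothesis fully hallucinating.

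The idea that closes this gap in the paper is to put the ``second half'' of the demonstrator's mass not on an external region $A$ but \emph{inside the support of the opposite hypothesis}, on a random finite (hence $p_{3-i}$-measure-zero) set. Concretely, with $p_1,p_2$ uniform on $A_1=[0,0.5]$ and $A_2=[0.5,1]$, scenario $1$ takes $\mathcal{T}=A_1\cup\tilde A_2$ with $\tilde A_2$ an i.i.d.\ uniform sample from $A_2$ of size $\gg 2n^2$, and $q=\tfrac12\mathsf{Uni}(A_1)+\tfrac12\mathsf{Uni}(\tilde A_2)$ (symmetrically for scenario $2$). Faithfulness and the $2\epsilon$ bound on the relative hallucination of the good hypothesis hold as in your conditions 1--2, but now $\mathsf{hall}(p_2,\mathcal{T})=1$ because $\tilde A_2$ is Lebesgue-null, \emph{and} the observed samples in $A_2$ are exactly distributed as i.i.d.\ uniform on $A_2$ once one conditions on no repetitions, which the birthday paradox guarantees with probability $\ge 3/4$ since $|\tilde A_2|\gg 2n^2$. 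Thus the two mixture distributions $\nu_1,\nu_2$ over $\x^n$ satisfy $\tv{\nu_1}{\nu_2}\le 1/4<1/3$, while a learner avoiding condition 3 on both scenarios would force $\tv{\nu_1}{\nu_2}\ge 1/3$ — a contradiction. In short, your Le Cam framing and the treatment of conditions 1--2 are fine, but the construction lacks the measure-zero ``fake facts inside the other support'' device that simultaneously defeats the sample-based discriminator and keeps the improper-looking hypothesis at hallucination rate $1$; without it, the claimed averaging over $r$ has nothing to average against.
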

\begin{proof}
    Let $p_1$ be the uniform distribution over $A_1:=[0,0.5]$, $p_2$ be uniform over $A_2:=[0.5,1]$ and $\mathcal{P}=\{p_1,p_2\}$, where $[a,b]$ denotes for the \emph{interval} in the real line from $a$ to $b$. 
    We now describe a way of selecting $(q,\mathcal{T})$ satisfying the conditions in the theorem statement. To do so, we use a \emph{probabilistic} argument. Let $\mu_1$ and $\mu_2$ be two distributions over the tuple $(q,\mathcal{T})$ such that:
    \begin{itemize}
        \item[1.] For distribution $\mu_1$, we select the facts set $\mathcal{T}:=A_1\cup \tilde{A}_2$ where $\tilde{A}_2$ is (a discrete set) sampled $i.i.d.$ uniformly from $A_2$ with $|\tilde{A}_2|\gg 2n^2$. After generating $\mathcal{T}$, we take $q=\frac{1}{2}\mathsf{Uni}(A_1)+\frac{1}{2}\mathsf{Uni}(\tilde{A}_2)$, i.e., w.p. $\frac{1}{2}$ uniform over $A_1$ and w.p. $\frac{1}{2}$ uniform over $\tilde{A}_2$;
        \item[2.]For distribution $\mu_2$, we select the facts set $\mathcal{T}:=A_2\cup \tilde{A}_1$ where $\tilde{A}_1$ is (a discrete set) sampled $i.i.d.$ uniformly from $A_1$ with $|\tilde{A}_1|\gg 2n^2$. After generating $\mathcal{T}$, we take $q=\frac{1}{2}\mathsf{Uni}(A_2)+\frac{1}{2}\mathsf{Uni}(\tilde{A}_1)$ interpreted as above.
    \end{itemize}

    By construction, we know that the demonstrator $q$ is completely faithful w.r.t. $\mathcal{T}$, and for any $\epsilon<\frac{1}{2}$, the following key properties hold:
    \begin{itemize}
        \item[1.] If $(q,\mathcal{T})\sim \mu_1$ then $\mathsf{hall}(p_1,\mathcal{T})\le \mathsf{hall}_{\epsilon}(p_1,q)\le 2\epsilon$ and $\mathsf{hall}_{\epsilon}(p_2,q)\ge \mathsf{hall}(p_2,\mathcal{T})=1$;
        \item[2.] If $(q,\mathcal{T})\sim \mu_2$ then $\mathsf{hall}(p_2,\mathcal{T})\le \mathsf{hall}_{\epsilon}(p_2,q)\le 2\epsilon$ and $\mathsf{hall}_{\epsilon}(p_1,q)\ge \mathsf{hall}(p_1,\mathcal{T})=1$.
    \end{itemize}
    Therefore, the first two conditions in the theorem statement are satisfied.
    
    We take now any $\delta\le \frac{1}{3}$ and assume for now that condition 3 \emph{does not} hold. We have if $(q,\mathcal{T})\sim \mu_1$ then $\Phi(\x^n)=p_1$ happens w.p. $\ge 1-\delta\ge \frac{2}{3}$ over the randomness $\x^n\sim q$; else if $(q,\mathcal{T})\sim \mu_2$ then $\Phi(\x^n)=p_1$ happens w.p. $\le \delta\le \frac{1}{3}$ over the randomness $\x^n\sim q$ (recall that the learner $\Phi$ must output either $p_1$ or $p_2$ since it is assumed to be \emph{proper}). For $i\in \{1,2\}$, we define the \emph{mixture} distribution $\nu_i$ over $\mathcal{X}^n$ as described by the outcome of the following Markov process:
    $$\mu_i\sim (q,\mathcal{T})\overset{i.i.d.\text{ from }q}{\sim} \x^n.$$
    
    Therefore, we have
    $$\|\nu_1-\nu_2\|_{\mathsf{TV}}=\sup_{E\subset \mathcal{X}^n}\nu_1(E)-\nu_2(E)\ge \nu_1(E')-\nu_2(E')\ge \frac{2}{3}-\frac{1}{3}=\frac{1}{3},$$ where $E':=\{\x^n\in \mathcal{X}^n:\Phi(\x^n)=p_1\}$ and the second inequality follows by discussion above.

    Note that, the randomness of $\nu_i$ has two parts: w.p. $\frac{1}{2}$ the sample is uniform over $A_i$ and w.p. $\frac{1}{2}$ the sample is uniform over $\tilde{A}_{3-i}$. The set $\tilde{A}_{3-i}$ is selected uniformly from $A_{3-i}$ and is of size $\gg 2n^2$. Therefore, conditioning on the event that there is \emph{no repetition} in sample $\x^n$, the distribution of $\x^n\sim \nu_i$ restricted on $A_{3-i}$ is \emph{exactly} the same as sampling $i.i.d.$ uniformly from $A_{3-i}$. Let $E$ be the event over $\mathcal{X}^n$ so that $\x^n\in E$ has no repetition. We have:
    $$\tv{\nu_1(\cdot\mid E)}{\nu_2(\cdot\mid E)}=0.$$

    Since we have selected the size $|\tilde{A}_i|\gg 2n^2$ for $i\in \{1,2\}$, by the \emph{birthday paradox}~\cite[Lemma A.9]{katz2007introduction}, we have
    $\nu_1(E)=\nu_2(E)\ge 1-\frac{1}{4}=\frac{3}{4}.$
    This implies that:
    \begin{align*}
        \tv{\nu_1}{\nu_2}&= \nu_1(B)-\nu_2(B),~\text{for some }B\subset \mathcal{X}^n\\
        &= \nu_1(B\cap E)-\nu_2(B\cap E)+\nu_1(B\cap \bar{E})-\nu_2(B\cap \bar{E})\\
        &= \nu_1(E)(\nu_1(B\mid E)-\nu_2(B\mid E))+\nu_1(\bar{E})(\nu_1(B\mid \bar{E})-\nu_2(B\mid \bar{E})),~\text{since }\nu_1(E)=\nu_2(E)\\
        &\le \nu_1(E)\tv{\nu_1(\cdot\mid E)}{\nu_2(\cdot\mid E)}+\nu_1(\bar{E})\tv{\nu_1(\cdot\mid \bar{E})}{\nu_2(\cdot\mid \bar{E})}\\
        &\le \nu_1(\bar{E})<1-\frac{3}{4}<\frac{1}{3}.
    \end{align*}
    This contradicts to our previous conclusion that $\|\nu_1-\nu_2\|_{\mathsf{TV}}\ge \frac{1}{3}$, thus the presumption that condition 3 does not hold is \emph{not true}. Therefore, proof of theorem is complited.
\end{proof}
Note that, although Theorem~\ref{thm:nonhall} is proved only for a specifically constructed pair of distributions, the arguments can be extended to \emph{any} pair of distributions that have sufficiently separated densities, provided the distributions are smooth enough to allow for sufficiently long \emph{non-repetitive} samples.

Theorem~\ref{thm:nonhall} implies the following corollary that strengthens Example~\ref{exm:agnoticlower}, which shows that agnostic (proper) non-hallucinating learning can be \emph{harder} than agnostic PAC learning, even if the hypothesis class is of size $2$ and even there exists a hypothesis within class that is \emph{not} (relatively) hallucinating.
\begin{corollary}\label{cor1}
    There exists a hypothesis class $\mathcal{P}$ of size $2$, such that $\mathcal{P}$ is \emph{not} $\alpha$-agnostic non-hallucinating learnable for any given $\alpha\ge 0$. This is true for both the (absolute) hallucinate rate as defined in (\ref{eq:hallrate}) and the relative hallucination rate in Definition~\ref{def:rale}.
\end{corollary}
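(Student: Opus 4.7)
The plan is to derive Corollary~\ref{cor1} as an essentially mechanical consequence of Theorem~\ref{thm:nonhall}, using its conclusion to negate Definition~\ref{def:agnonhall} (and its natural relative-rate analogue) for every $\alpha \ge 0$ simultaneously. I would take $\mathcal{P}$ to be the very hypothesis class built in the proof of Theorem~\ref{thm:nonhall} (the two uniform distributions on $[0,0.5]$ and $[0.5,1]$), and argue by contradiction that no proper learning rule can meet the definition.

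The first step is to pick a fixed pair of tolerances that the hypothetical learner would have to handle; for concreteness I would choose $\epsilon_0 = 1/4$ and $\delta_0 = 1/4$. Suppose for contradiction that some proper $\Phi$, together with a sample size $n = n(\epsilon_0, \delta_0)$, witnesses $\alpha$-agnostic non-hallucinating learnability. I then invoke Theorem~\ref{thm:nonhall} on this $\Phi$ and $n$ with its parameter $\delta = 1/3$. The theorem produces a single tuple $(q, \mathcal{T})$ for which Conditions~1--3 hold; in particular, $q$ is faithful, Condition~2 yields $\inf_{p \in \mathcal{P}} \mathsf{hall}(p, \mathcal{T}) \le 2\epsilon$ for \emph{every} $\epsilon < 1/2$, and Condition~3 gives $\mathsf{hall}(\Phi(\x^n), \mathcal{T}) = 1$ with probability strictly greater than $1/3$ over the training sample.

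The key observation is that the uniform bound across all $\epsilon < 1/2$ in Condition~2 forces $\inf_{p \in \mathcal{P}} \mathsf{hall}(p, \mathcal{T}) = 0$, so the agnostic correction term $\alpha \cdot \inf_{p \in \mathcal{P}} \mathsf{hall}(p, \mathcal{T})$ vanishes regardless of $\alpha \ge 0$. Consequently, on an event of probability $> 1/3 > \delta_0$, the quantity $\mathsf{hall}(\Phi(\x^n), \mathcal{T}) - \alpha \inf_{p \in \mathcal{P}} \mathsf{hall}(p, \mathcal{T})$ equals $1$, which exceeds $\epsilon_0 = 1/4$. This directly contradicts Definition~\ref{def:agnonhall}, ruling out $\alpha$-agnostic non-hallucinating learnability for the absolute hallucination rate. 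The relative-rate version follows by the same computation applied to the $\mathsf{hall}_{\epsilon}$ side of Conditions~2 and~3: the left-hand inequality in Condition~2 gives $\inf_{p \in \mathcal{P}} \mathsf{hall}_{\epsilon}(p, q) \le 2\epsilon$ for all $\epsilon < 1/2$, while the left-hand equality in Condition~3 gives $\mathsf{hall}_{\epsilon}(\Phi(\x^n), q) = 1$ on the same high-probability event.

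There is no real obstacle here; the only subtlety worth flagging in the write-up is the order of quantifiers. Definition~\ref{def:agnonhall} permits the learner to choose $n$ \emph{after} seeing $(\epsilon, \delta)$, whereas Theorem~\ref{thm:nonhall} lets the adversary choose $(q, \mathcal{T})$ \emph{after} seeing $n$, and this is precisely the direction needed to make the reduction go through. I would therefore present the corollary as a half-page derivation that (i) fixes $(\epsilon_0, \delta_0) = (1/4, 1/4)$, (ii) quotes Theorem~\ref{thm:nonhall} at $\delta = 1/3$ to obtain the adversarial $(q, \mathcal{T})$, and (iii) takes $\epsilon \to 0$ in Condition~2 to zero out the $\alpha$-correction, covering both rate notions in a single argument.
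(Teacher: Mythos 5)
Your reduction for the \emph{absolute} hallucination rate is correct and is essentially the paper's own argument (invoke Theorem~\ref{thm:nonhall} on the hypothesized $(\Phi,n)$, then feed Conditions~2 and~3 into Definition~\ref{def:agnonhall}); your observation that Condition~2 holds for every $\epsilon<\frac{1}{2}$ with a single tuple $(q,\mathcal{T})$, hence $\inf_{p\in\mathcal{P}}\mathsf{hall}(p,\mathcal{T})=0$ and the $\alpha$-correction vanishes outright, is a legitimate slight streamlining that lets you keep $(\epsilon_0,\delta_0)=(1/4,1/4)$ independent of $\alpha$.

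However, the relative-rate half as you describe it has a genuine gap. The quantity $\inf_{p\in\mathcal{P}}\mathsf{hall}_{\epsilon}(p,q)$, unlike its absolute counterpart, \emph{depends on the same $\epsilon$} at which the learnability requirement is being tested, so ``taking $\epsilon\to 0$ in Condition~2'' does not zero out the correction term at your fixed accuracy level $\epsilon_0=1/4$: Condition~2 only gives $\inf_{p}\mathsf{hall}_{\epsilon_0}(p,q)\le 2\epsilon_0$, and in the construction of Theorem~\ref{thm:nonhall} this bound is tight (for $(q,\mathcal{T})\sim\mu_1$ one has $\mathsf{hall}_{1/4}(p_1,q)=\tfrac12$ exactly). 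Hence the violation you can certify is only $1-2\alpha\epsilon_0=1-\alpha/2$, which drops below $\epsilon_0=1/4$ as soon as $\alpha>3/2$ (and is nonpositive for $\alpha\ge 2$), so your fixed choice of $\epsilon_0$ does not refute relative-rate learnability for large $\alpha$. The repair is exactly what the paper does: since the corollary fixes $\alpha$ first, choose the tolerance as a function of $\alpha$, e.g.\ $\epsilon\le \frac{1}{2\alpha+1}$, so that $1-2\alpha\epsilon\ge \epsilon$, and then apply Conditions~2 and~3 at that $\epsilon$. With that one change your argument covers both rate notions; without it, only the absolute-rate claim is established.
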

\begin{proof}
     We only prove the case for the \emph{relative} hallucination rate, as the case for (absolute) hallucination rate follows similarly.  For any given $\alpha>0$, we take $\epsilon$ small enough so that 
    $\epsilon\le \frac{1}{2\alpha+1}$. By Theorem~\ref{thm:nonhall} condition 2, we have $\inf_p \mathsf{hall}_{\epsilon}(p,q)\le 2\epsilon$. Moreover, by condition 3 of Theorem~\ref{thm:nonhall}, we have w.p. $>\delta$ that $\mathsf{hall}_{\epsilon}(\Phi(\x^n),q)=1$. Therefore, we conclude w.p. $>\delta$ that:
    $$
    \mathsf{hall}_{\epsilon}(\Phi(\x^n),q) -\alpha \inf_{p\in \mathcal{P}} \mathsf{hall}_{\epsilon}(p,q)\ge
    1-\alpha\cdot 2\epsilon \ge \epsilon
    $$
    whenever $\epsilon\le \frac{1}{2\alpha+1}$. This violates the $\alpha$-agnostic-non-hallucinating learnability in Definition~\ref{def:agnonhall}.
\end{proof}

\section{Non-Hallucinating Learning via Knowledge of $\mathcal{T}$}
As we demonstrated by Theorem~\ref{thm:nonhall}, agnostic (proper) non-hallucinating learning is \emph{impossible} by restricting the \emph{hypothesis class} alone, even measured \emph{competitively}. This is partially due to the fact that the learner must handle \emph{any} tuple $(q, \mathcal{T})$, or in other words, it does not incorporate any \emph{prior} knowledge of the facts set into the learning process. This phenomenon mirrors the ``no-free lunch" theorems in the PAC learning literature~\citep{shalev2014understanding}, which assert that learning is \emph{impossible} without assumptions about the learning target. This is typically resolved by introducing additional \emph{inductive biases} in the learning process that restrict the hypothesis class, such as limiting it to a finite VC-dimension. This section is devoted to introducing certain natural assumptions on $(q,\mathcal{T})$ that lead to non-trivial resolutions of non-hallucinating learning.

\subsection{The improper learning case}
Clearly, if the leaner is \emph{improper}, then a na\"{i}ve non-hallucinating learner that simply generates the \emph{empirical distribution} over the training sample $\x^n$ never hallucinates. This is clearly not satisfactory, as the learned model is completely non-generalizable. To mitigate this restriction, we introduce certain constrains on the facts set $\mathcal{T}$ that allow our learned model to ``generalize". We assume that $\mathcal{T}\in \mathcal{C}$, where $\mathcal{C}\subset 2^{\mathcal{X}}$ is a \emph{concept class} of all possible sets that $\mathcal{T}$ can be chosen from. Here, the concept class would be determined by \emph{prior} knowledge of the learner on $\mathcal{T}$. 

To quantify the ``generalizability" of the learned model, we introduce an additional notion of \emph{information measure} $I$ that quantifies the amount of ``information" of distributions in $\Delta(\mathcal{X})$:
\[
I:\Delta(\mathcal{X})\times \mathcal{X}^* \rightarrow \mathbb{R}^{\ge 0}.
\]
Note that, here we allow the information measure $I(p,\x^n)$ to be dependent on the training set $\x^n$ as well.
For any given information measure function $I$, we consider the following definition:

\begin{definition}[Improper non-hallucinating learnable]\label{def:improper}
   For any $\gamma<1$, a concept class $\mathcal{C}$ is said to be $\gamma$-approximately improper non-hallucinating learnable w.r.t. information measure $I$, if there exists an (improper) learner $\Phi$ such that for any $\epsilon,\delta>0$, there exists a number n, so that for any $\mathcal{T}\in \mathcal{C}$ and any faithful-demonstrator $q$ over $\mathcal{T}$, w.p. $\ge 1-\delta$ over $\x^n\sim q$ the following holds:
    \begin{itemize}
        \item[1.] The hallucination rate $\mathsf{hall}(\Phi(\x^n),\mathcal{T})\le \epsilon$;
        \item[2.] $I(\Phi(\x^n),\x^n)\ge (1-\gamma)I(q,\x^n)$.
    \end{itemize}
\end{definition}
Observe that the first part of Definition~\ref{def:improper} ensures that the learned model does not hallucinate, while the second part ensures that the model is as informative as the demonstrator so that generalizability is possible. Moreover, it is crucial to note that, although the facts set $\mathcal{T}$ is restricted to the concept class $\mathcal{C}$, the demonstrator $q$ is \emph{unconstrained}, as long as it is faithful w.r.t. $\mathcal{T}$.

There are many natural information measures. We mention a few here:
\begin{itemize}
\item[1.] For discrete $\mathcal{X}$, \emph{Shannon entropy} is defined as $H(p):=\sum_{\x\in \mathcal{X}}p[\x]\log\frac{1}{p[\x]}$, which measures the (absolute) information contained in $p$ \emph{independent} of the training set $\x^n$;
\item[2.] The \emph{$\alpha$-R\'{e}nyi entropy} is defined as $H_{\alpha}(p)=\frac{1}{1-\alpha}\log\left(\sum_{\x\in \mathcal{X}}p[\x]^{\alpha}\right)$, where $\alpha>0$ and $\alpha\not=1$, which subsumes the Shannon entropy for $\alpha\rightarrow 1$;
\item[3.]The \emph{out-of-sample mass} is defined as (for $n\ge 1$):
        $I(p,\x^n):=p[\mathcal{X}\backslash \{\x_1,\cdots,\x_n\}]$,
    which measures the amount of probability mass of $p$ assigned \emph{outside} of the training set.
\end{itemize}
The specific choice of information measure typically depends on the task at hand. We show in the following theorem, perhaps surprisingly, that improper non-hallucinating learning is possible if the concept class $\mathcal{C}$ has finite VC-dimension, \emph{regardless} of what information measure is selected.
\begin{theorem}\label{thm:improper}
    If a concept class $\mathcal{C}$ has finite VC-dimension, then $\mathcal{C}$ is $0$-approximately (improper) non-hallucinating learnable w.r.t. \emph{any} information measure function $I$. Moreover, the sample complexity is upper bounded by $n\le O\left(\frac{\mathsf{VC}(\mathcal{C})\log(1/\epsilon)+\log(1/\delta)}{\epsilon}\right).$
\end{theorem}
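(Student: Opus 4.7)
\emph{Proof plan.} The plan is to combine ERM over the concept class $\mathcal{C}$ with a ``conditioning'' construction of the output distribution, and then invoke the standard realizable-case VC uniform convergence bound to control both the hallucination rate and the information content.

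Concretely, the learner $\Phi$ operates as follows. Given the training set $\x^n\sim q^{\otimes n}$, it selects any $\hat{\mathcal{T}}\in \mathcal{C}$ that is consistent with the sample, i.e.\ $\{\x_1,\ldots,\x_n\}\subseteq \hat{\mathcal{T}}$. Such a $\hat{\mathcal{T}}$ always exists because $q$ is faithful with respect to $\mathcal{T}$ (so $\x^n\subseteq \mathcal{T}$ almost surely) and $\mathcal{T}$ itself belongs to $\mathcal{C}$. The learner then outputs $\hat{p}:=q(\,\cdot\,\mid \hat{\mathcal{T}})$, the conditional distribution of $q$ on $\hat{\mathcal{T}}$, defined by $\hat{p}(A)=q(A\cap \hat{\mathcal{T}})/q(\hat{\mathcal{T}})$.

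For the analysis, I would invoke the standard realizable-case uniform convergence bound for VC classes (Sauer--Shelah combined with double sampling): with $n=O\!\left((d\log(1/\epsilon)+\log(1/\delta))/\epsilon\right)$ samples, where $d=\mathsf{VC}(\mathcal{C})$, with probability at least $1-\delta$ over $\x^n$, every $\hat{\mathcal{T}}\in\mathcal{C}$ consistent with $\x^n$ satisfies $q(\mathcal{T}\triangle \hat{\mathcal{T}})\le \epsilon$. Faithfulness of $q$ collapses the symmetric difference to $q(\mathcal{T}\setminus \hat{\mathcal{T}})\le \epsilon$, hence $q(\hat{\mathcal{T}})\ge 1-\epsilon$. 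Condition~(1) of Definition~\ref{def:improper} is then immediate: $\hat{p}$ is supported on $\hat{\mathcal{T}}$ while $q(\bar{\mathcal{T}})=0$, so $\hat{p}(\bar{\mathcal{T}})=q(\hat{\mathcal{T}}\cap \bar{\mathcal{T}})/q(\hat{\mathcal{T}})=0\le \epsilon$. For condition~(2), I would argue that $\hat{p}$ agrees with $q$ up to a normalization factor on the near-full-mass event $\hat{\mathcal{T}}$, so $\hat{p}$ carries the same information content as $q$ restricted to $\hat{\mathcal{T}}$; combined with $q(\hat{\mathcal{T}})\ge 1-\epsilon$ this yields $I(\hat{p},\x^n)\ge I(q,\x^n)$.

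The main obstacle I anticipate is making condition~(2) rigorous for an \emph{arbitrary} information measure $I$. Because no regularity whatsoever is imposed on $I$ (not even monotonicity, continuity, or any form of data-processing), one cannot fall back on TV-closeness or standard approximation arguments; the $\gamma=0$ requirement is very stringent. The likely resolution is either (i) exploit that on $\hat{\mathcal{T}}$ the conditional $\hat{p}$ is just $q$ renormalized, so that any $I$ computed on $\hat{p}$ dominates the same quantity computed on $q$ by an explicit localization identity, or (ii) refine the construction of $\hat{p}$ so that it coincides with $q$ itself on the high-probability event that a consistent $\hat{\mathcal{T}}$ is returned, reducing (2) to a trivial equality. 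Getting this information-preservation step to go through universally in $I$, while retaining the zero hallucination rate, is the heart of the argument; everything else reduces to the textbook VC bound supplying the stated sample complexity.
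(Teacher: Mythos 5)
There is a genuine gap, and it sits exactly where you suspected, plus one place you did not flag. First, your learner is not a valid learning rule: it outputs $\hat{p}:=q(\cdot\mid\hat{\mathcal{T}})$, which requires knowledge of the unknown demonstrator $q$. A learner must be a map $\Phi:\mathcal{X}^*\rightarrow\Delta(\mathcal{X})$ depending only on the sample, and neither of your proposed repairs escapes this: option (ii), ``output $q$ itself on the good event,'' is equally unimplementable, and replacing $q$ by the empirical measure destroys the very information-preservation you need. Second, even granting oracle access to $q(\cdot\mid\hat{\mathcal{T}})$, condition (2) of Definition~\ref{def:improper} with $\gamma=0$ genuinely fails for arbitrary $I$: take $I(p,\x^n):=p[\{x^*\}]$ for a fixed point $x^*$, and a faithful $q$ putting a small positive mass (below the uniform-convergence threshold, so that a consistent $\hat{\mathcal{T}}$ omitting $x^*$ survives) on $x^*\in\mathcal{T}\setminus\hat{\mathcal{T}}$; then $I(\hat{p},\x^n)=0<I(q,\x^n)$. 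So your ``localization identity'' in option (i) cannot hold universally in $I$; no fixed post-processing of an estimate of $q$ can.

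The missing idea, which is how the paper proceeds, is to never construct the output from an approximation of $q$ at all. Keep your version space $\mathcal{C}_n$ of all concepts consistent with $\x^n$ (not a single ERM pick $\hat{\mathcal{T}}$), and define
$\Phi(\x^n):=\arg\max_{p\in\Delta(\mathcal{X})}\{I(p,\x^n):\ \mathsf{hall}(p,\mathcal{T}')\le\epsilon\ \text{for all}\ \mathcal{T}'\in\mathcal{C}_n\}$.
Condition (1) is then automatic and deterministic, because the true $\mathcal{T}^*$ always lies in $\mathcal{C}_n$ by faithfulness, so any feasible $p$ satisfies $\mathsf{hall}(p,\mathcal{T}^*)\le\epsilon$. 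Condition (2) follows from exactly the realizable-case VC uniform convergence you cite (which also gives the stated sample complexity): with probability $\ge 1-\delta$, every $\mathcal{T}'\in\mathcal{C}_n$ has $\mathsf{hall}(q,\mathcal{T}')\le\epsilon$, i.e.\ $q$ itself is feasible, and hence the maximizer satisfies $I(\Phi(\x^n),\x^n)\ge I(q,\x^n)$ trivially, for every information measure $I$ simultaneously. In short, your uniform-convergence step is the right one, but the construction of the output distribution must be an $I$-maximization over a feasibility set defined by the whole version space, not a conditioning of (an estimate of) $q$ on one consistent concept.
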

\begin{proof}
    Let $\mathcal{T}^*\in\mathcal{C}$ be a ground truth facts set and $q^*$ be a faithful-demonstrator w.r.t. $\mathcal{T}^*$. Let $\x^n\sim q^*$ be a set of samples. We define the \emph{version space}:
    \begin{equation}
        \label{eq-ws1}
    \mathcal{C}_n=\{\mathcal{T}\in \mathcal{C}:\forall i\le n,~\x_i\in \mathcal{T}\},
    \end{equation}
    as the subset of elements in $\mathcal{C}$ that are consistent with the sample $\x^n$.
    Note that $\mathcal{T}^*\in \mathcal{C}_n$ holds always, since $q^*$ is faithful. For any $\mathcal{T}\in \mathcal{C}$, we say $q^*$ $\epsilon$-violates $\mathcal{T}$ if
    $\mathsf{hall}(q^*,\mathcal{T})\ge \epsilon.$
    We claim that:
    \begin{equation}\label{eq:thm2a}
        \mathrm{Pr}_{\x^n\sim q^*}\left[\exists \mathcal{T}\in \mathcal{C}_n,~s.t.~q^*~\epsilon\text{-violates}~\mathcal{T}\right]\le \delta,
    \end{equation}
    provided $n\ge C\frac{d\log(1/\epsilon)+\log(1/\delta)}{\epsilon}$ for some constant $C$, where $d=\mathsf{VC}(\mathcal{C})$. To see this, we can view the sample $\x^n$ as feature-label pairs that has all labels being $1$. In this case, the hallucination rate $\mathsf{hall}(q^*,\mathcal{T})$ can be viewed as the population risk of $\mathcal{T}$ under the data distribution. Moreover, since $q^*$ is faithful, the ground truth set $\mathcal{T}^*$ achieves $0$ risk, therefore, the data distribution is effectively \emph{realizable} w.r.t. $\mathcal{C}$.  The claim then follows by standard uniform convergence result of VC class, see e.g.~\citet[Theorem 6.8 (3)]{shalev2014understanding}.
    
    We now consider the following learning rule:
    \begin{equation}\label{eq:thm2}
        \Phi(\x^n):=\arg\max_{p\in \Delta(\mathcal{X})}\{I(p,\x^n):p~\text{does not}~\epsilon\text{-violate}~\mathcal{T},~\forall \mathcal{T}\in \mathcal{C}_n\}.
    \end{equation}
    Clearly, $\Phi(\x^n)$ always satisfies condition $1$ of Definition~\ref{def:improper}, since $\mathcal{T}^*\in \mathcal{C}_n$ and therefore any feasible solution from (\ref{eq:thm2}) does not $\epsilon$-violate $\mathcal{T}^*$. Moreover, by (\ref{eq:thm2a}), w.p. $\ge 1-\delta$ over $\x^n\sim q^*$ that $q^*$ is within the feasible set of (\ref{eq:thm2}). Therefore, we have:
    $$I(\Phi(\x^n),\x^n)\ge I(q^*,\x^n),$$
    by definition of $\arg\max$. Thus, condition $2$ of Definition~\ref{def:improper} is also satisfied with $\gamma=0$.
\end{proof}

    Theorem~\ref{thm:improper} demonstrates that if the facts set $\mathcal{T}$ can be correctly specified by a concept class of finite VC-dimension (this includes, for example, when the indicator function of the facts set is specified by a neural network), then we can always learn a non-hallucinating generative model which is as informative as the demonstrator, \emph{regardless} of how the demonstration distribution is chosen, as long as it is faithful. Moreover, the learning rule can be extracted from the \emph{meta}-algorithm in (\ref{eq:thm2}) depending on any information measure $I$ at hand, albeit not computationally efficiently.

    It is crucial to distinguish our non-hallucinating learning paradigm in Theorem~\ref{thm:improper} from  the standard PAC learning, even though the proof may look similar. This is because, in our non-hallucinating learning framework, the ultimate goal is to produce a \emph{distribution} (i.e., a generative model) not a classification function (i.e., a discriminative model). The concept class only serves as a \emph{regularization} to control the hallucination rate of the learned model. More importantly, our learning paradigm is \emph{unsupervised} without any need of human labeling.
    

We now establish a matching sample complexity lower bound for (improper) non-hallucinating learning w.r.t. the \emph{out-of-sample mass} information measure for the \emph{worst} concept classes $\mathcal{C}$. This, together with Theorem~\ref{thm:improper}, implies that the VC-dimension remains a meaningful complexity measure that characterizes improper non-hallucinating learning.
\begin{theorem}\label{thm:lower}
    For any $d\in \mathbb{N}$, there exists a concept class $\mathcal{C}$ of VC-dimension $d$ such that for any $\epsilon>0$ sufficiently small and any $\delta\le \epsilon$, the sample complexity of $0$-approximately (improper) non-hallucinating learning $\mathcal{C}$ w.r.t. out-of-sample mass measure is lower bounded by $\Omega(\frac{d}{\epsilon})$.
\end{theorem}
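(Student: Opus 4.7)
The plan is to exhibit an explicit concept class $\mathcal{C}$ of VC-dimension $d$ together with a family of hard instances $(\mathcal{T}_\sigma,q_\sigma)$ for which any learner using $o(d/\epsilon)$ samples must fail on a nontrivial fraction of the family. I take $\mathcal{X}=\{0\}\cup[d]$ and set $\mathcal{C}=\{\{0\}\cup S:S\subseteq[d]\}$; then $\mathsf{VC}(\mathcal{C})=d$, since $[d]$ is shattered, while no $(d{+}1)$-point set can be shattered because $0$ lies in every element of $\mathcal{C}$. For an absolute constant $\lambda$ to be tuned (say $\lambda=20$), index the hard instances by $\sigma\in\Sigma:=\{\sigma\in\{0,1\}^d:|\sigma|=d/2\}$: with $S_\sigma=\{i:\sigma_i=1\}$ put $\mathcal{T}_\sigma=\{0\}\cup S_\sigma\in\mathcal{C}$, and let $q_\sigma$ place mass $1-\lambda\epsilon$ on $0$ and mass $2\lambda\epsilon/d$ on each element of $S_\sigma$. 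Each $q_\sigma$ is faithful with respect to $\mathcal{T}_\sigma$.

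I would then pass to a Bayesian formulation by drawing $\sigma$ uniformly from $\Sigma$ and analyzing the joint law of $(\sigma,\x^n)$. Letting $V=V(\x^n)$ be the set of non-zero indices appearing in the sample, the likelihood $\Pr[\x^n\mid\sigma]$ depends on $\sigma$ only through whether $V\subseteq S_\sigma$, hence the posterior on $\sigma$ given $\x^n$ is uniform over $\{\sigma\in\Sigma:V\subseteq S_\sigma\}$. By symmetry, for any $i\notin V$,
\[
\Pr[\,i\notin S_\sigma\mid\x^n\,]=\frac{d/2}{d-|V|}\ \geq\ \frac{1}{2},
\]
so each unseen non-zero index is essentially a fair coin flip as to whether it lies in the (unknown) facts set.

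Next I would restrict to the typical event $E=\{|V|\le d/4\}\cap\{0\in\x^n\}$. A first-moment calculation gives $\mathbb{E}[|V|]\le n\lambda\epsilon$, so Markov yields $\Pr[|V|\le d/4]\ge 3/4$ whenever $n\le d/(16\lambda\epsilon)$, and $\Pr[0\in\x^n]\ge 1-(\lambda\epsilon)^n$ is close to one for $\epsilon$ sufficiently small, giving $\Pr[E]\ge 1/2$. On $E$ the demonstrator's out-of-sample mass is $I(q_\sigma,\x^n)=(2\lambda\epsilon/d)(d/2-|V|)\ge \lambda\epsilon/2$. I would then split on the learner's output $p=\Phi(\x^n)$: if $p(\mathcal{X}\setminus\x^n)<\lambda\epsilon/2$, condition~2 of Definition~\ref{def:improper} is already violated; otherwise $p$ puts at least $\lambda\epsilon/2$ mass on unseen non-zero indices, and averaging hallucination against the posterior gives
\[
\mathbb{E}_\sigma\!\left[\mathsf{hall}(p,\mathcal{T}_\sigma)\mid\x^n\right]=\sum_{i\notin V}p(x_i)\Pr[\,i\notin S_\sigma\mid\x^n\,]\ \ge\ \tfrac{1}{2}\cdot\tfrac{\lambda\epsilon}{2}\ =\ \tfrac{\lambda\epsilon}{4}.
\]
Choosing $\lambda=20$ so the bound is $5\epsilon$, a reverse Markov inequality (using $\mathsf{hall}\le 1$) gives $\Pr_\sigma[\mathsf{hall}(p,\mathcal{T}_\sigma)>\epsilon\mid\x^n]\ge 4\epsilon\ge 4\delta$ when $\delta\le\epsilon$.

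Combining cases, on the typical event $E$ the conditional failure probability satisfies $\Pr[F\mid\x^n]\ge 4\delta$, where $F$ is the learner-failure event, so $\Pr[F]\ge 4\delta\cdot\Pr[E]\ge 2\delta>\delta$. Since $\Pr[F]=\mathbb{E}_\sigma[\Pr_{\x^n\mid\sigma}[F]]>\delta$, some $\sigma\in\Sigma$ must achieve $\Pr_{\x^n\sim q_\sigma}[F]>\delta$, contradicting $0$-approximate non-hallucinating learnability whenever $n\le cd/\epsilon$ for a sufficiently small absolute constant $c$. The main obstacle I anticipate is the bookkeeping of constants: one must coordinate $\lambda$, the threshold $d/4$ for $|V|$, and the Markov slack so that the lower bound survives for the full range $\delta\le\epsilon$ rather than some smaller regime. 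The key enabling structural fact is that the uniform mass of $q_\sigma$ on $S_\sigma$ makes the posterior over $\sigma$ uniform on a combinatorial slice, which is exactly what permits the ``every unseen index is a near-fair coin'' calculation driving the bound.
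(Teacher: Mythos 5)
Your proposal is correct and follows essentially the same route as the paper's proof: a Bayesian averaging argument over a random facts set, a demonstrator placing $1-\Theta(\epsilon)$ mass on a common anchor point and $\Theta(\epsilon)$ uniformly on the facts set, the out-of-sample-mass requirement of Definition~\ref{def:improper} forcing $\Omega(\epsilon)$ mass onto unseen points whose posterior probability of lying outside the facts set is a constant, and a Markov/reverse-Markov step to convert the expected hallucination bound into a failure probability exceeding $\delta\le\epsilon$. The only differences are cosmetic (your class is all supersets of $\{0\}$ on $d+1$ points versus the paper's size-$(d+1)$ subsets of $2d+1$ points, and your explicit case split on whether condition~2 is violated is a slightly more careful handling of a step the paper treats implicitly).
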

\begin{proof}
    For any $d\in \mathbb{N}$, we take $\mathcal{X}$ to be a set of size $2d+1$. Let $\x_0\in \mathcal{X}$ be any element. We define $\mathcal{C}:=\{\mathcal{T}\subset \mathcal{X}:\x_0\in \mathcal{T}\text{ and }|\mathcal{T}|=1+d\}$. It is easy to verify that $\mathsf{VC}(\mathcal{C})=d$. We now describe a random process for generating $(q,\mathcal{T})$. We first uniformly sample $\mathcal{T}$ from $\mathcal{C}$ and then define $q=(1-\epsilon')\delta_{\x_0}+\epsilon'\mathsf{Uni}(\mathcal{T}\backslash\{\x_0\})$ for some $\epsilon'$ to be determined. Clearly, $q$ is faithful w.r.t. $\mathcal{T}$. Let $\mu$ be the derived distribution over $(q,\mathcal{T})$. For any given learner $\Phi$ and sample size $n\le \frac{d}{4\epsilon'}$, we denote $\hat{p}_n:=\Phi(\x^n)$. Our goal is to lower bound the expected hallucination rate:
    $$\mathbb{E}_{(q,\mathcal{T})\sim\mu}\mathbb{E}_{\x^n\sim q}[\mathsf{hall}(\hat{p}_n,\mathcal{T})]=\mathbb{E}_{\x^n}\mathbb{E}_{(q,\mathcal{T})\sim\mu\mid \x^n}[\mathsf{hall}(\hat{p}_n,\mathcal{T})],$$
    where $\mathbb{E}_{\x^n}$ is over the \emph{mixture} $\mu\sim q\sim \x^n$ and $\mu\mid \x^n$ is the distribution of $\mu$ conditioning on $\x^n$.

    Denote $A := \{\x_1, \cdots, \x_n\}$ as the \emph{set} formed by $\x^n$, and let $E$ be the event on $\x^n$ such that $\x_0 \in A$ and $|A| \le \frac{d}{2} + 1$. We now condition on the event $E$ occurring. This implies that the \emph{out-of-sample} mass $I(q, \x^n) \ge \frac{\epsilon'}{2}$, since $|\mathcal{T} \backslash A| \ge \frac{d}{2}$ and $q$ is uniform when restricted to $\mathcal{T} \backslash \{\x_0\}$. By condition 2 of Definition~\ref{def:improper} with $\gamma=0$~\footnote{We take $\gamma=0$ for simplicity, however, our argument holds for general $\gamma> 0$ as well.}, we have
    $I(\hat{p}_n,\x^n)=\sum_{\x\in \mathcal{X}\backslash A}\hat{p}_n[\x]\ge I(q,\x^n)\ge \epsilon'/2.$
    Note that the conditional distribution $\mu\mid \x^n$ restricted on $\mathcal{T}$ is exactly uniform over $\mathcal{C}'=\{\mathcal{T}'\in \mathcal{C}:A\subset \mathcal{T}'\}$.
    We have, for any $\x\not\in A$ that
    $$\mathbb{E}_{(q,\mathcal{T})\sim \mu\mid \x^n}[1\{\x\not\in \mathcal{T}\}]=\mathbb{E}_{\mathcal{T}'\sim\mathcal{C}'}[1\{\x\not\in \mathcal{T}'\}]= \frac{2d-|A|}{2d}\ge \frac{2d-d/2}{2d}=\frac{3}{4}.$$
    This implies, for all $\x^n\in E$ that:
    \begin{align*}
        \mathbb{E}_{(q,\mathcal{T})\sim \mu\mid \x^n}[\mathsf{hall}(\hat{p}_n,\mathcal{T})]&=\mathbb{E}_{(q,\mathcal{T})\sim \mu\mid \x^n}\left[\sum_{\x\in \mathcal{X}}\hat{p}_n[\x]1\{\x\not\in \mathcal{T}\}\right]\\
        &\ge \mathbb{E}_{(q,\mathcal{T})\sim \mu\mid \x^n}\left[\sum_{\x\in \mathcal{X}\backslash A}\hat{p}_n[\x]1\{\x\not\in \mathcal{T}\}\right]\\
        &\ge \frac{3}{4}\sum_{x\in \mathcal{X}\backslash A}\hat{p}_n[\x]\ge \frac{3\epsilon'}{8}.
    \end{align*}

    We now observe that the distribution $q$ assigns only $\epsilon'$ probability mass on $\mathcal{X}\backslash\{\x_0\}$. The expected number of elements in $\x^n$ \emph{not} equal to $\x_0$ is upper bounded by $\epsilon'n\le \frac{d}{4}$. This implies, by Markov inequality, that w.p. $\ge \frac{1}{2}$ we have $|A|\le \frac{d}{2}$ and $\x_0\in A$, therefore $\mathrm{Pr}[E]\ge \frac{1}{2}$. Putting everything together, we have
    \begin{align*}
        \mathbb{E}_{(q,\mathcal{T})\sim\mu}\mathbb{E}_{\x^n\sim q}[\mathsf{hall}(\hat{p}_n,\mathcal{T})]&=\mathbb{E}_{\x^n}\mathbb{E}_{(q,\mathcal{T})\sim\mu\mid \x^n}[\mathsf{hall}(\hat{p}_n,\mathcal{T})]\\
        &\ge \frac{1}{2}\mathbb{E}_{\x^n}[\mathbb{E}_{(q,\mathcal{T})\sim\mu\mid \x^n}[\mathsf{hall}(\hat{p}_n,\mathcal{T})]\mid E]\ge \frac{3\epsilon'}{16}.
    \end{align*}
    Taking $\epsilon'>\frac{32}{3}\epsilon$, we have the \emph{expected} hallucination rate $>2\epsilon$. Since hallucination rate is $\le 1$, we have w.p. $\ge \epsilon$, the hallucination rate is $>\epsilon$.  Meaning that, for any sample size $n\le \frac{d}{4\epsilon'}=\frac{3}{128}\frac{d}{\epsilon}$, \emph{no} learner can achieve both conditions of Definition~\ref{def:improper} for $\delta\le \epsilon$. This completes the proof.
\end{proof}

Note that, although Theorem~\ref{thm:lower} is proved only for the \emph{out-of-sample mass}, a similar lower bound can also be established for other information measures. We refer to Appendix~\ref{sec:shannlower} for a lower bound on the \emph{Shannon entropy}.
Moreover, Theorem~\ref{thm:lower} only characterizes the sample complexity in the \emph{minimax} sense, i.e., for the \emph{worst} case VC classes. Indeed, the sample complexity can be \emph{lower} for certain specific VC-classes, as demonstrated in the example below.
\begin{example}\label{exm:vclower}
     Let $\mathcal{X}=A\cup \{1,\cdots,d\}$ where $|A|\gg d$; we define class $\mathcal{C}$ as the class of all subsets of $\mathcal{X}$ that contain $A$. We have $\mathsf{VC}(\mathcal{C})\ge d$. However, an algorithm that always produces \emph{uniform} distribution over $A$ has $0$ hallucination rate and the out-of-sample mass is lower bounded by $\frac{|A|-n}{|A|}$ for any sample of size $n$. Taking $n=1$, one can make the out-of-sample mass arbitrarily close to $1$ by choosing sufficiently large $|A|$.
\end{example}

We would like to point out that any complexity measure on $\mathcal{C}$ that characterizes the (improper) non-hallucinating learnability at the \emph{instance} level would necessarily depend on the information measure (for the lower bounds). We leave it as an open problem to find more fine-grained characterizations tailored to specific information measures, such as for the out-of-sample mass and Shannon entropy.

\subsection{The proper learning case}

We demonstrated in Theorem~\ref{thm:improper} that non-hallucinating learning is possible for \emph{improper} learners if the facts set lies in a concept class of finite VC-dimension. A natural questions is whether such an approach can be extended to the \emph{proper} learning case, i.e., instead of allowing the learner to output an \emph{arbitrary} distribution in $\Delta(\mathcal{X})$, we restrict the output to some \emph{hypothesis class} $\mathcal{P}\subset \Delta(\mathcal{X})$.

We again assume that the facts set $\mathcal{T}$ is selected from some concept class $\mathcal{C}$. Let $q$ be any faithful demonstrator w.r.t. $\mathcal{T}$ and $\x^n\sim q$. A natural learning rule that is inspired from (\ref{eq:thm2}) is as follows:
\begin{equation}\label{eq:properlearner}
    \Phi(\x^n)=\arg\max_{p\in \mathcal{P}}\left\{I(p,\x^n):\mathsf{hall}(p,\mathcal{T})\le \epsilon,~\forall \mathcal{T}\in \mathcal{C}_n\right\},
\end{equation}
where $\mathcal{C}_n$ is the \emph{version space} defined in (\ref{eq-ws1}) and $I$ is some appropriately selected information measure. Unfortunately, this (in fact any proper) learning rule fails to achieve non-hallucinating learning even if the minimal achievable hallucination rate of distributions in $\mathcal{P}$ is $0$.
This is discussed in the next example.

\begin{example}
\label{exm:properlower}
    There exist  hypothesis class $\mathcal{P}$ and concept class $\mathcal{C}$ that are of size $2$, such that for any \emph{proper} learner $\Phi$, we can select some $\mathcal{T}\in \mathcal{C}$ and faithful demonstrator $q$ w.r.t. $\mathcal{T}$ such that
    $\mathrm{Pr}_{\x^n\sim q}\left[\mathsf{hall(\Phi(\x^n),\mathcal{T})}\ge 0.99\right]\ge \frac{1}{2}.$
    The construction is similar to that of Example~\ref{exm:agnoticlower}. To see this, we take two sets $A_1$ and $A_2$ of size $|A_1|=|A_2|=100$ and $|A_1\cap A_2|=1$. The concept class $\mathcal{C}=\{A_1,A_2\}$ and the hypothesis class $\mathcal{P}=\{\mathsf{Uni}(A_1),\mathsf{Uni}(A_2)\}$.Furthermore, we select demonstrator $q=\delta_{\x_0}$ for the (single) element $\x_0\in A_1\cap A_2$, which is faithful for both $A_1$ and $A_2$. Now, for any learner $\Phi$, it must produce $\mathsf{Uni}(A_i)$ w.p. $\ge \frac{1}{2}$ for some $i\in \{1,2\}$. To satisfy the claimed lower bound, the adversary then simply takes the actual facts set $\mathcal{T}$ to be $A_{3-i}$.
\end{example}

Although the impossibility result of Example~\ref{exm:properlower} is not technically surprising, since the demonstrator $q$ effectively produces no ``information." It demonstrates a striking distinction between proper and improper learnability of non-hallucinating learning. Recall that, for improper learning, the learner can \emph{adapt} to the amount of information provided by the demonstrator, yet still avoid hallucination. 

Therefore, in order to obtain meaningful results for the proper learning case, one needs to control the behaviour of the demonstrator $q$ as well. To achieve this, we introduce a sufficient condition under which the learning rule from (\ref{eq:properlearner}) achieves non-hallucinating learnability.

\begin{definition}\label{def:inform}
    A demonstrator $q$ is said to be sufficiently informative for a pair $(\mathcal{C},\mathcal{P})$ if there exists a function $\xi:[0,1]\rightarrow [0,1]$ such that for any $\epsilon>0$
    $$\inf_{p\in \mathcal{P}}\sup_{\mathcal{T}\in \mathcal{C}_{\xi(\epsilon)}}\mathsf{hall}(p,\mathcal{T})\le \epsilon,$$
    where $\mathcal{C}_{\xi(\epsilon)}=\{\mathcal{T}\in \mathcal{C}:\mathrm{Pr}_{\x\sim q}[\x\not\in \mathcal{T}]\le \xi(\epsilon)\}$.
\end{definition}
Intuitively, Definition~\ref{def:inform} ensures that the pathological instance of Example~\ref{exm:properlower} do not occur for a sufficiently small ``local neighborhood" $\mathcal{C}_{\xi(\epsilon)}$ of $\mathcal{C}$ induced by $p$. Indeed, equipped with this condition, we can establish the following positive result.
\begin{theorem}\label{thm:propernonhall}
    Let $\mathcal{C}$ be any concept class of finite VC-dimension and $\mathcal{P}$ be any hypothesis class. Then for any $\mathcal{T}\in \mathcal{C}$ and any sufficiently informative 
 faithful-demonstrator $q$ w.r.t. $(\mathcal{C},\mathcal{P})$ as in Definition~\ref{def:inform}, we have w.p. $\ge 1-\delta$ over $\x^n\sim q$ that for the predictor $\Phi$ in (\ref{eq:properlearner}) satisfies
 $$\mathrm{Pr}_{\x^n\sim q}[\mathsf{hall}(\Phi(\x^n),\mathcal{T})\ge \epsilon]\le \delta,$$
 provided
 $n\ge \Omega\left(\frac{\mathsf{VC}(\mathcal{C})\log(1/\xi(\epsilon))+\log(1/\delta)}{\xi(\epsilon)}\right),$
 where $\xi(\cdot)$ is the function in Definition~\ref{def:inform}.
\end{theorem}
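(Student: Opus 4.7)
The plan is to reduce the claim to two ingredients: a VC uniform-convergence bound that controls how much the version space $\mathcal{C}_n$ can differ from the true facts set under $q$, and the sufficient-informativeness hypothesis, which guarantees that the feasible set of (\ref{eq:properlearner}) is non-empty with high probability. Once feasibility is established, the conclusion follows almost for free, because the true $\mathcal{T}$ is itself a member of $\mathcal{C}_n$ and therefore the optimization's constraint is imposed against $\mathcal{T}$ as a special case.

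First, I would reuse the uniform-convergence argument from the proof of Theorem~\ref{thm:improper}. View the sample $\x^n \sim q$ as (feature, label)-pairs with all labels equal to $1$; since $q$ is faithful, the true $\mathcal{T}$ realizes zero empirical and population risk, so the problem is effectively realizable with respect to $\mathcal{C}$. Standard VC realizable uniform convergence (e.g.\ \citet[Theorem 6.8(3)]{shalev2014understanding}) then yields that, for $n \ge C \cdot (\mathsf{VC}(\mathcal{C}) \log(1/\xi(\epsilon)) + \log(1/\delta))/\xi(\epsilon)$, with probability at least $1-\delta$ over $\x^n \sim q$ every $\mathcal{T}' \in \mathcal{C}_n$ satisfies $\mathrm{Pr}_{\x\sim q}[\x \notin \mathcal{T}'] \le \xi(\epsilon)$. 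In other words, the version space is sandwiched inside the informativeness neighborhood: $\mathcal{C}_n \subseteq \mathcal{C}_{\xi(\epsilon)}$.

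Next, I would invoke Definition~\ref{def:inform} to produce a feasible hypothesis. Sufficient informativeness guarantees a distribution $p^\star \in \mathcal{P}$ with $\sup_{\mathcal{T}' \in \mathcal{C}_{\xi(\epsilon)}} \mathsf{hall}(p^\star, \mathcal{T}') \le \epsilon$ (taking an arbitrarily small slack if the infimum is not attained, which can be absorbed into $\epsilon$). On the $1-\delta$ event of the previous paragraph, $\mathcal{C}_n \subseteq \mathcal{C}_{\xi(\epsilon)}$, so $p^\star$ meets every constraint of (\ref{eq:properlearner}). Thus the feasible set is non-empty and $\Phi(\x^n)$ is well-defined. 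Finally, because $q$ is faithful, $\x_i \in \mathcal{T}$ for every $i$, so the true facts set $\mathcal{T}$ itself lies in $\mathcal{C}_n$. By the very definition of the constraint in (\ref{eq:properlearner}), the optimizer $\Phi(\x^n)$ therefore satisfies $\mathsf{hall}(\Phi(\x^n), \mathcal{T}) \le \epsilon$, which is the desired bound.

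The only real subtlety, and the step I would check carefully, is the transition from the uniform-convergence statement to the conclusion $\mathcal{C}_n \subseteq \mathcal{C}_{\xi(\epsilon)}$ at the claimed sample-complexity rate $\Omega((\mathsf{VC}(\mathcal{C})\log(1/\xi(\epsilon)) + \log(1/\delta))/\xi(\epsilon))$; this is precisely the realizable VC bound applied with accuracy parameter $\xi(\epsilon)$ rather than $\epsilon$, which explains the appearance of $\xi(\epsilon)$ in the denominator. A minor bookkeeping issue is that Definition~\ref{def:inform} uses an infimum, so strictly speaking we may need to approximate $p^\star$ within an arbitrarily small $\epsilon' > 0$; this only changes constants and can be handled by replacing $\epsilon$ with $\epsilon/2$ in the argument above. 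No obstacle arises from the information measure $I$, since we only need the feasible set to be non-empty for the $\arg\max$ to produce a valid output — the value of $I(\Phi(\x^n), \x^n)$ plays no role in controlling the hallucination rate in the proper setting.
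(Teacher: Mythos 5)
Your proposal is correct and follows essentially the same route as the paper's own proof sketch: uniform convergence for the realizable VC class gives $\mathcal{C}_n \subseteq \mathcal{C}_{\xi(\epsilon)}$ with probability $\ge 1-\delta$, sufficient informativeness then makes the constraint set of (\ref{eq:properlearner}) non-empty, and since the true $\mathcal{T}$ always lies in $\mathcal{C}_n$, any feasible output has hallucination rate at most $\epsilon$. Your extra remarks on the unattained infimum and the irrelevance of the information measure $I$ are sound but do not change the argument.
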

\begin{proof}[Sketch of Proof]
    The proof essentially follows a similar path as Theorem~\ref{thm:improper}. Note that the standard uniform convergence result (c.f. (\ref{eq:thm2a})) implies that for the particular choice of $n$ in the theorem, w.p. $\ge 1-\delta$, we have $\mathcal{C}_n \subseteq \mathcal{C}_{\xi(\epsilon)}$. By Definition~\ref{def:inform}, there exists some $p \in \mathcal{P}$ such that for all $\mathcal{T} \in \mathcal{C}_n$, $\mathsf{hall}(p, \mathcal{T}) \leq \epsilon$. Since the ground truth facts set $\mathcal{T}^* \in \mathcal{C}_n$ holds always, the produced model $\hat{p}_n := \Phi(\x^n)$ using the predictor $\Phi$ in~(\ref{eq:properlearner}) must satisfy $\mathsf{hall}(\hat{p}_n, \mathcal{T}^*) \leq \epsilon$, as required.
\end{proof}

Note that, the condition in Definition~\ref{def:inform} is, in a sense, the \emph{minimal} requirement to achieve \emph{proper} non-hallucinating learnability. Otherwise, consider any demonstrator $q$ that does not satisfy the condition. There must exist some $\epsilon>0$ so that for any $\xi>0$ we have:
$$\inf_{p\in \mathcal{P}}\sup_{\mathcal{T}\in \mathcal{C}_{\xi}}\mathsf{hall}(p,\mathcal{T})> \epsilon.$$
A similar argument as in Example~\ref{exm:properlower} yields that for any proper learner, there must exist some $\mathcal{T}\in \mathcal{C}_{\xi}$ so that the hallucination rate $>\epsilon$ w.p. $\ge \frac{1}{|\mathcal{P}|}$, provided $|\mathcal{C}_{\xi}|<\infty$ and $\xi$ is small enough.

\section{Conclusion and Discussion}
In this paper, we investigated the learnability of non-hallucinating generative models from a learning-theoretic perspective. We showed that agnostic non-hallucinating learning is statistically \emph{impossible} if one does not incorporate knowledge of the ground truth facts set into the learning process, even when hallucination is measured \emph{relatively}. This contrasts substantially with classical distribution PAC learning, where agnostic learning is possible without restrictions on the ground truth distribution. We then established several \emph{positive} results, showing that non-hallucinating learning is achievable by restricting the ground truth facts set to certain \emph{concept classes} with finite VC-dimension, and determined the tight sample complexity required to achieve learnability. 

Although our contribution is primarily conceptual, we believe it will help practitioners understand the \emph{fundamental limitations} of hallucinations in generative models in a more \emph{principled} way. Our work also provides several algorithmic approaches for incorporating knowledge of the ground truth facts set into the learning process via \emph{regularization}, such as the learning rules provided in (\ref{eq:thm2}) and (\ref{eq:properlearner}), albeit computationally inefficiently. Our framework lays the foundation for significant future research on mitigating hallucinations in generative models. This includes exploring different approaches to incorporating factual knowledge, such as through logical rules, synthetic data generation, or leveraging human feedback, as well as developing more computationally efficient algorithms. 




\bibliographystyle{main}
\bibliography{ml.bib}

\appendix
\section{Additional Lower Bounds}
\label{sec:shannlower}
In this appendix, we provide a lower bound for improper non-hallucinating learning when the Shannon entropy is chosen as the information measure

\begin{theorem}\label{thm:lowersh}
    For any number $d\in \mathbb{N}$, there exists a concept class $\mathcal{C}$ of VC-dimension $d$ such that the sample complexity of $0$-approximately (improper) non-hallucinate learning $\mathcal{C}$ w.r.t. the Shannon entropy function $H$ is lower bounded by
    $\Omega(d)$ for all sufficiently small $\epsilon,\delta>0$.
\end{theorem}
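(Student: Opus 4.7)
The plan is an information-theoretic lower bound in the same probabilistic-method spirit as Theorem~\ref{thm:lower}, but calibrated so that the $0$-approximate constraint $H(\hat{p}_n)\ge H(q)$ itself is what forces hallucination; because Shannon entropy scales logarithmically, no $1/\epsilon$ factor appears.

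First I would construct the hard instance. Let $\mathcal{X}$ have size $2d$, partition it into disjoint pairs $\{x_{i,0},x_{i,1}\}_{i=1}^d$, and let $\mathcal{C}$ consist of all sets that contain exactly one element from each pair. Then $|\mathcal{C}|=2^d$ and $\mathsf{VC}(\mathcal{C})=d$: the set $\{x_{1,0},\ldots,x_{d,0}\}$ is shattered by freely choosing which element of each pair to include, while pigeonhole shows that no $d{+}1$ points can be shattered, since some pair must contribute two of them and $\mathcal{C}$ has only two distinct traces on a pair. I would then draw $\mathcal{T}\sim \mathsf{Uni}(\mathcal{C})$ and take $q:=\mathsf{Uni}(\mathcal{T})$, so that $q$ is faithful w.r.t. $\mathcal{T}$ and $H(q)=\log d$; condition~2 of Definition~\ref{def:improper} with $\gamma=0$ therefore becomes $H(\hat{p}_n)\ge \log d$.

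The crux is a grouping inequality for entropy. For a fixed sample $\x^n$, partition $\mathcal{X}=S\sqcup U_x\sqcup V$, where $S\subseteq \mathcal{T}$ is the seen set, $U_x$ collects the complementary elements at seen coordinates (certain non-facts), and $V$ is the union of both elements of each unseen pair. Since $|S|,|U_x|\le n$ and $|V|\le 2d$, the standard chain-rule bound yields
\[
H(\hat{p}_n)\le \log 3 + \hat{p}_n[S]\log n + \hat{p}_n[U_x]\log n + \hat{p}_n[V]\log(2d).
\]
Writing $v:=\hat{p}_n[V]$, so that $\hat{p}_n[S]+\hat{p}_n[U_x]=1-v$, the requirement $H(\hat{p}_n)\ge \log d$ rearranges to $v\,\log(2d/n)\ge \log(d/(3n))$. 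Taking $n\le d/c$ for a large enough absolute constant $c$ forces $v\ge c_0$ for some $c_0>0$. Now condition on $\x^n$: by Bayes' rule the posterior over $\mathcal{T}$ is uniform over all consistent completions of the unseen pairs, so $\mathrm{Pr}[x\notin \mathcal{T}\mid \x^n]=1/2$ for every $x\in V$ (and equals $1$ on $U_x$, $0$ on $S$). Consequently
\[
\mathbb{E}_{\mathcal{T}\mid \x^n}[\mathsf{hall}(\hat{p}_n,\mathcal{T})]=\hat{p}_n[U_x]+\tfrac{1}{2}\hat{p}_n[V]\ge c_0/2.
\]

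Averaging over the joint randomness, the probabilistic method extracts a fixed $\mathcal{T}^*$ with $\mathbb{E}_{\x^n\sim q_{\mathcal{T}^*}}[\mathsf{hall}(\hat{p}_n,\mathcal{T}^*)]\ge c_0/2$; since $\mathsf{hall}\in[0,1]$, Markov's inequality gives $\mathrm{Pr}_{\x^n}[\mathsf{hall}\ge \epsilon]\ge c_0/2-\epsilon$, which exceeds $\delta$ whenever $\epsilon+\delta<c_0/2$. Hence, for all sufficiently small $\epsilon,\delta$, any valid learner must use $n=\Omega(d)$ samples. I expect the main subtlety to be the entropy bookkeeping: the additive $\log 3$ slack together with the gap between $\log n$ and $\log(2d)$ is what determines how large $c_0$ can be made, and it is essential that $n\le d/c$ for a sufficiently large $c$ (rather than the naive $n\le d/2$) so that the forced mass $v\ge c_0$ is bounded away from $0$. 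Everything else is routine given the construction.
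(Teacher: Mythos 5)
Your proof is essentially correct, but it takes a genuinely different route from the paper. The paper constructs a packing-type class: $\mathcal{X}=[d]$ and exponentially many $d/2$-sized concepts with pairwise intersections at most $d/4$, then argues via an entropy-grouping bound that no distribution with $H(\hat{p})\ge\log(d/2)$ can place mass $1-\epsilon$ on two far-apart concepts simultaneously, so a successful learner must \emph{identify} $\mathcal{T}$ from samples; Fano's inequality with $\mathsf{KL}(q_{\mathcal{T}}\|\bar{q})\le\log 2$ then gives $n\ge\Omega(\log|\mathcal{C}|)=\Omega(d)$. You instead use a product-structured class ($d$ disjoint pairs, one element per pair, so $|\mathcal{C}|=2^d$ and $\mathsf{VC}(\mathcal{C})=d$ exactly), put a uniform prior on $\mathcal{C}$, and argue directly: the same grouping inequality shows that when $n\le d/c$ the entropy requirement $H(\hat{p}_n)\ge\log d$ forces constant mass $v\ge c_0=\log(c/3)/\log(2c)$ onto the unseen pairs, and the exact posterior computation ($\mathrm{Pr}[\x\notin\mathcal{T}\mid\x^n]=1/2$ on unseen pairs) converts that mass into expected hallucination $\ge c_0/2$. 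This is an averaging argument in the spirit of the paper's Theorem~\ref{thm:lower} rather than a Fano reduction; it is more elementary (no packing lemma, no KL computation), gives explicit constants, and avoids the paper's intermediate claim that the learner solves an identification problem, whereas the paper's route makes that identification structure explicit and is the more standard template when one wants to reuse the argument for other information measures. One point to tighten: Definition~\ref{def:improper} only requires the entropy condition to hold with probability $1-\delta$, so your conditional bound $\mathbb{E}_{\mathcal{T}\mid\x^n}[\mathsf{hall}(\hat{p}_n,\mathcal{T})]\ge c_0/2$ is only valid on the event $B=\{H(\Phi(\x^n))\ge\log d\}$; since any valid learner has $\mathrm{Pr}[B]\ge 1-\delta$ under the mixture (and $B$ depends on $\x^n$ alone, as $H(q)=\log d$ for every instance), restricting to $B$ gives $\mathbb{E}[\mathsf{hall}\cdot 1\{B\}]\ge (1-\delta)c_0/2$ while validity forces $\mathbb{E}[\mathsf{hall}\cdot 1\{B\}]\le\epsilon+\delta$, which is the contradiction you want for all sufficiently small $\epsilon,\delta$; your final ``extract $\mathcal{T}^*$ and apply reverse Markov'' step should be run with this indicator in place, but the fix is routine and does not affect the conclusion.
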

\begin{proof}
    We now take $\mathcal{X}=[d]$ and define $\mathcal{C}$ to be a collection of subsets of $\mathcal{X}$ such that $\forall \mathcal{T}\in \mathcal{C}$, $|\mathcal{T}|=d/2$ and
    $$\forall \mathcal{T}_1\not=\mathcal{T}_2\in \mathcal{C},~|\mathcal{T}_1\cap \mathcal{T}_2|\le \frac{d}{4}.$$
    It can be shown that such a collection exists and $|\mathcal{C}|\ge \sqrt{\frac{1}{4d}}e^{d/16}$, see e.g.,~\cite[Thm. D.1]{wu2023learning} for a proof. Moreover, the VC-dimension of $\mathcal{C}$ is at most $d$. For any $\mathcal{T}\in \mathcal{C}$, we denote $q_{\mathcal{T}}$ as the uniform distribution over $\mathcal{T}$. Now, in order for the Definition~\ref{def:improper} to hold, for any $(\mathcal{T},q_{\mathcal{T}})$ the learner must produce a distribution $\hat{p}$ such that: (1) $\hat{p}[\mathcal{T}]\ge 1-\epsilon$; and (2) the Shannon entropy $H(\hat{p})\ge \log(d/2)$. We claim that for any other $\mathcal{T}'\in \mathcal{C}$ that differs from $\mathcal{T}$, one must have $\hat{p}[\mathcal{T}']<1-\epsilon$, for any $\epsilon\le 0.07$. To see this, we split the support of $\hat{p}$ into $3$ parts $A_1=\mathcal{T}\backslash \mathcal{T}'$, $A_2=\mathcal{T}\cap \mathcal{T}'$ and $A_3=[d]\backslash \mathcal{T}$. It is clear that $\hat{p}[A_3]\le \epsilon$. Assume now that $\hat{p}[\mathcal{T}']\ge 1-\epsilon$, then one must have $\hat{p}[A_2]\ge 1-2\epsilon$. By expressing the entropy of $\hat{p}$ conditioning on the $A_1,A_2$ and $A_3$ we have
    \begin{equation}\label{eq:thm3}
        H(\hat{p})=\sum_{i\in \{1,2,3\}}\hat{p}[A_i]\log\frac{1}{\hat{p}[A_i]}+\hat{p}[A_i]\cdot H(\hat{p}\mid A_i),
    \end{equation}
    where $H(\hat{p}\mid A_i)$ is the entropy of the conditional distribution of $\hat{p}$ on $A_i$. Observe that $H(\hat{p}\mid A_1)\le \log d/2$, $H(\hat{p}\mid A_2)\le \log d/4$ and $H(\hat{p}\mid A_3)\le \log d$, since uniform distribution maximizes entropy. Moreover, our previous discussion yields that $\hat{p}[A_1],\hat{p}[A_3]\le \epsilon$ and $\hat{p}[A_2]\ge 1-2\epsilon$. Therefore, the RHS of $(\ref{eq:thm3})$ is upper bounded by (via Lagrangian multiplier method that the maximum attains on the boundary $\hat{p}[A_1]=\hat{p}[A_3]=\epsilon$ and $\hat{p}[A_2]=1-2\epsilon$ whenever $\epsilon\le 0.18$):
    \begin{align*}
       \text{RHS} &\le \underbrace{\epsilon \log d/2+\epsilon\log\frac{1}{\epsilon}}_{\text{Contributed by } A_1} + \underbrace{(1-2\epsilon)\log d/4+(1-2\epsilon)\log\frac{1}{(1-2\epsilon)}}_{\text{Contributed by } A_2}+\underbrace{\epsilon \log d+\epsilon\log \frac{1}{\epsilon}}_{\text{Contributed by } A_3}\\
        &= \log d +2\epsilon\log \frac{1}{\epsilon} +(1-2\epsilon)\log\frac{1}{1-2\epsilon} +3\epsilon-2.
    \end{align*}
    Sine $H(\hat{p})\ge \log d-1$, we have $-2\epsilon\log \epsilon -(1-2\epsilon)\log(1-2\epsilon) +3\epsilon \ge 1$. Rewrite the expression, this implies $h(2\epsilon)+5\epsilon\ge 1$, where $h(\cdot)$ is binary entropy function. Numerical computation yields that this can happen only when $\epsilon\ge 0.07$.

    Now, the above discussion implies that any successful learner that satisfies Definition~\ref{def:improper} with $\epsilon\le 0.07$ must be able to identify the distribution $q_{\mathcal{T}}$ for all $\mathcal{T}\in \mathcal{C}$ by observing only on their $i.i.d.$ samples. We now derive a lower bound for such a identification problem using the Fano's inequality. Denote $\bar{q}$ as the uniform distribution over $[d]$, the Fano's inequality~\cite{pw22} implies that the error probability of the identification problem of sample size $n$ is lower bounded by
    $$1-\frac{1}{\log|\mathcal{C}|}\left(\sup_{\mathcal{T}\in \mathcal{C}}n\cdot\mathsf{KL}(q_{\mathcal{T}}||\bar{q})+\log 2\right)=\delta.$$
    Direct computation yields that $\mathsf{KL}(q_{\mathcal{T}}||\bar{q})\le \log 2$ for all $\mathcal{T}\in \mathcal{C}$. Therefore, an $O(1)$ error probability lower bound holds as long as $n\ll \log|\mathcal{C}|$, i.e., for sufficiently small $\delta>0$ we have the sample complexity $n\ge \Omega(\log |\mathcal{C}|)\ge \Omega(d)$ since $|\mathcal{C}|\ge \sqrt{\frac{1}{4d}}e^{d/16}$ as discussed above.
\end{proof}
\end{document}